\documentclass{article}






\usepackage{natbib}
\usepackage[utf8]{inputenc} 
\usepackage[T1]{fontenc}    
\usepackage{hyperref}       
\usepackage{url}            
\usepackage{booktabs}       
\usepackage{amsfonts}       
\usepackage{nicefrac}       
\usepackage{microtype}      
\usepackage{xcolor}         


\usepackage{hyperref}       
\usepackage{url}            
\usepackage{booktabs}       
\usepackage{amsfonts}       

\usepackage{graphicx}
\usepackage{subcaption}

\usepackage{tcolorbox}
\definecolor{darkred}{RGB}{150,0,0}
\definecolor{darkgreen}{RGB}{0,150,0}
\definecolor{darkblue}{RGB}{0,0,150}
\hypersetup{colorlinks=true, linkcolor=red, citecolor=darkblue, urlcolor=darkblue}

\usepackage{tikz}
\usepackage{amssymb}
\usepackage{amsmath}
\usepackage{amsmath,amsfonts,amssymb,amsthm}
\usepackage{mathtools}
\usepackage{commath}
\usepackage{flexisym}
\usepackage[utf8]{inputenc}
\usepackage{csquotes}
\usepackage[english]{babel}

\usepackage{color}
\usepackage{bbm}

\usepackage{amsmath}
\DeclareMathOperator*{\argmax}{arg\,max}

\newtheorem{lemma}{Lemma}

\newtheorem{theorem}{Theorem}
\newtheorem{myassum}{Assumption}

\theoremstyle{definition}
\newtheorem{definition}{Definition}

\usepackage[shortlabels,inline]{enumitem}
\usepackage{lipsum}

\usepackage{enumitem}
\usepackage{xcolor}
\usepackage[linesnumbered,ruled,vlined]{algorithm2e}

\SetCommentSty{mycommfont}

\SetKwInput{KwInput}{Input}                
\SetKwInput{KwOutput}{Output} 
\usepackage[margin=1in]{geometry}

\newcommand{\tocite}[1]{\textcolor{green}{[CITE]}}

\usepackage{xspace}

\newcommand{\Unif}{{\rm Unif}}
\newcommand{\Group}{{\rm Group}\xspace}
\newcommand{\ExpPh}{{\rm ExpPh}\xspace}
\newcommand{\Planning}{{\rm Planning}\xspace}
\newcommand{\distMT}{DistMT-LSVI\xspace}
\newcommand{\csep}{c_{\rm Sep}}

\newcommand{\la}{\lambda}

\newcommand{\nn}{\nonumber}


\newcommand{\bal}{\begin{align}}
\newcommand{\eal}{\end{align}}



\newcommand{\phib}{\boldsymbol{\phi}}

\newcommand{\qb}{\mathbf{q}}

\newcommand{\etab}{\boldsymbol{\eta}}


%







\newcommand{\A}{\mathbf{A}}

\newcommand{\Yb}{\mathbf{Y}}






\newcommand{\x}{\mathbf{x}}

\newcommand{\y}{\mathbf{y}}

\newcommand{\Iden}{\mathbf{I}}


\newcommand{\Fc}{\mathcal{F}}

\newcommand{\Sc}{{\mathcal{S}}}

\newcommand{\Vc}{\mathcal{V}}
\newcommand{\Dc}{\mathcal{D}}

\newcommand{\Nc}{\mathcal{N}}

\newcommand{\Cc}{\mathcal{C}}
\newcommand{\Mc}{\mathcal{M}}

\newcommand{\Ac}{\mathcal{A}}

\newcommand{\Oc}{\mathcal{O}}




\newcommand{\beq}{\begin{equation}}
\newcommand{\eeq}{\end{equation}}
\newcommand{\bea}{\begin{align}}
\newcommand{\eea}{\end{align}}



\newcommand{\Otilde}{\tilde\Oc}

\newcommand{\Pb}{\mathbb{P}}

\newcommand{\mub}{\boldsymbol \mu}

\newcommand{\thetab}{\boldsymbol\theta}

\newcommand{\Lambdab}{\boldsymbol\Lambda}

\usepackage{authblk}
\author[1]{Sanae Amani}
\author[2]{Khushbu Pahwa}
\author[3]{Vladimir Braverman}
\author[4]{Lin F. Yang}
\affil[1,2,4]{University of California, Los Angeles}
\affil[3]{Rice University}
{
    \makeatletter
    \renewcommand\AB@affilsepx{, \protect\Affilfont}
    \makeatother
    \affil[1]{samani@ucla.edu}
\affil[2]{khushbu16pahwa@g.ucla.edu}
\affil[3]{vb21@rice.edu}
\affil[4]{linyang@ee.ucla.edu}
}

\setlength\parindent{0pt}
\setlength{\parskip}{5pt}


\title{Scaling Distributed Multi-task Reinforcement Learning\\ with Experience Sharing}

%

\begin{document}

\sloppy
\date{}
\maketitle

\begin{abstract}

Recently, DARPA launched the ShELL program, which aims to explore how experience sharing can benefit distributed lifelong learning agents in adapting to new challenges. In this paper, we address this issue by conducting both theoretical and empirical research on distributed multi-task reinforcement learning (RL), where a group of $N$ agents collaboratively solves $M$ tasks without prior knowledge of their identities. We approach the problem by formulating it as linearly parameterized contextual Markov decision processes (MDPs), where each task is represented by a context that specifies the transition dynamics and rewards. To tackle this problem, we propose an algorithm called \distMT. First, the agents identify the tasks, and then they exchange information through a central server to derive $\epsilon$-optimal policies for the tasks. Our research demonstrates that to achieve $\epsilon$-optimal policies for all $M$ tasks, a single agent using \distMT needs to run a total number of episodes that is at most $\Otilde({d^3H^6(\epsilon^{-2}+\csep^{-2})}\cdot M/N)$, where $\csep>0$ is a constant representing task separability, $H$ is the horizon of each episode, and $d$ is the feature dimension of the dynamics and rewards. Notably, \distMT improves the sample complexity of non-distributed settings by a factor of $1/N$, as each agent independently learns $\epsilon$-optimal policies for all $M$ tasks using $\Otilde(d^3H^6M\epsilon^{-2})$ episodes. Additionally, we provide numerical experiments conducted on OpenAI Gym Atari environments that validate our theoretical findings.

\end{abstract}

\section{Introduction}

In recent years, there has been a growing interest in the development of distributed learning agents, which refer to multiple agents collaborating and communicating to collectively solve learning or decision-making problems with improved efficiency \citep{mcmahan2017communication,alavi2002comparative}. Concurrently, the field of multi-task learning has emerged, focusing on agents that face multiple tasks and aim to learn policies that optimize performance across all tasks \citep{caruana1997multitask,zhang2018overview,crawshaw2020multi}. The intersection of these two research areas presents scenarios where multiple learning agents cooperate to build multi-purpose embodied intelligence, such as robots operating in weakly structured environments \citep{roy2021machine}. Motivated by these developments, the Defense Advanced Research Projects Agency (DARPA) has launched the Shared-Experience Lifelong Learning (ShELL) program. The program seeks to address how experience sharing can assist distributed lifelong learning agents in effectively adapting to new challenges \citep{darpa:2021}. This research aims to explore the potential benefits of collaborative learning approaches in enhancing the capabilities of distributed agents in dynamic environments.



In this work, we delve into the theoretical and empirical aspects of distributed multi-task reinforcement learning (RL) \citep{li2019accelerating,lazaric2010bayesian}. In this setting, a group of $N$ agents collaboratively tackles $M$ tasks in a pure exploration manner, with the task identifications initially unknown. It is assumed that the tasks exhibit variations in rewards and transition dynamics, but share the same state and action spaces \citep{sodhani2021multi}. In consecutive learning rounds, the agents are assigned tasks drawn from a uniform distribution ($\Unif([M])$). The ultimate objective is for the agents to cooperate effectively, ensuring that by the end of the exploration phase, they all possess $\epsilon$-optimal policies for all tasks, while minimizing the total number of episodes required during the exploration phase to execute a policy.


Formally, we consider an episodic setup based on the framework of contextual MDP \citep{abbasi2014online,hallak2015contextual}. It repeats the following steps:
\begin{enumerate*}[label=\textit{\arabic*)}]
    \item At the beginning of each learning round, each agent receives an unknown context specifying the assigned task.
    \item Each agent initially spends certain number of episodes interacting with the corresponding task's environment, which together with communications with others through the server, help it identify the task and whether it has been solved by any other agents before.
    \item If the agent determines that the task has already been solved, it obtains the necessary statistics from the agent who previously solved the task to determine the task's $\epsilon$-optimal policy. Otherwise, it initiates the learning process for the $\epsilon$-optimal policy from scratch.
\end{enumerate*}

The performance of each agent is evaluated based on the total number of episodes required for it to interact with the unknown environments of assigned tasks and communicate with other agents through the server, ultimately gaining access to $\epsilon$-optimal policies for all $M$ tasks. To maximize the benefits of the collaborative learning process, we aim for this number to scale as $\frac{M}{N\epsilon^2}$, ensuring efficient utilization of the agents' cooperative nature. Remarkably, this scaling results in a multiplicative reduction of exploration episodes by a factor of $1/N$ compared to non-distributed settings.


On the multi-task side, the closest lines of work are \citet{modi2020no,abbasi2014online,hallak2015contextual,modi2018markov,kakade2020information} for contextual MDP and \citet{wu2021accommodating,abels2019dynamic} for the dynamic setting of multi-objective RL, which study the sample complexity for arbitrary task sequences; however, they either assume the problem is tabular with finite state and action spaces or require a model-based planning oracle with unknown complexity. Importantly, none of the existing works properly addresses the need for a distributed learning framework, which creates a large gap between the abstract setup and practice need of speeding up the learning process.

In this paper, we aim to establish a foundation for designing agents meeting these practically important requirements. As the first step, here, we study distributed multi-task RL with linear representation. We suppose that the contextual MDP is linearly parameterized~\citep{yang2019sample, jin2020provably} and agents need to learn a multi-task policy based on this linear representation. However, the fact that tasks' identities are unknown to the agents make the efficient communications challenging. To overcome this hurdle, tasks must possess distinguishable features that enable agents to identify them and effectively share knowledge by communicating specific measurements related to the environments/tasks they interact with. In particular, we introduce a task-separability assumption, which is sufficient to ensure that the agents are able to distinguish between their assigned tasks throughout the learning process so that they can share information only when needed and when a task has already been solved. Under these assumptions, we propose a provably efficient distributed multi-task RL algorithm, Distributed Multi-Task Least Value Iteration (\distMT). We show that in order to obtain $\epsilon$-optimal policies for all $M$ tasks, the total number of episodes a single agent needs to run \distMT is at most $\Otilde({d^3H^6(\epsilon^{-2}+\csep^{-2})}\cdot M/N)$, where $\csep>0$ is a constant characterizing task separability, $H$ is horizon of each episode and $d$ is the feature dimension of the dynamics and rewards. Remarkably, \distMT improves the sample complexity of non-distributed settings, where each agent separately learns all $M$ tasks' $\epsilon$-optimal policies using $\Otilde(d^3H^6M\epsilon^{-2})$ episodes, by a factor of $1/N$.

Finally, we present numerical experiments on OpenAI Gym Atari environments that corroborate our theoretical findings.

\textbf{Notation.} 
Throughout, we use lower-case letters for scalars, lower-case bold letters for vectors, and upper-case bold letters for matrices. The Euclidean norm of $\x$ is denoted by $\norm{\x}_2$. We denote the transpose of any column vector $\x$ by $\x^\top$. For any vectors $\x$ and $\y$, we use $\langle \x,\y\rangle$ to denote their inner product. Let $\A$ be a positive semi-definite $d\times d$ matrix and $\boldsymbol \nu \in\mathbb R^d$. The weighted 2-norm of $\boldsymbol \nu$ with respect to $\A$ is defined by $\norm{\boldsymbol \nu}_\A = \sqrt{\boldsymbol \nu^\top \A \boldsymbol \nu}$. For a positive integer $n$, $[n]$ denotes the set $\{1,2,\ldots,n\}$. Finally, we use standard $\Otilde$ notation for big-O notation that ignores logarithmic factors.

\section{Problem formulation} \label{sec:formulate}
\paragraph{Finite-horizon contextual MDP.}
We consider the problem of learning $M$ tasks, each of which is modeled by an MDP. Each task $m\in[M]$ is associated with an MDP $\Mc_m=(\Sc,\Ac, H,\Pb_m, r_m)$, where the state space $\Sc$, the action space $\Ac$, and the horizon $H$ (length of each episode) are shared amongst all tasks, and $\Pb_m=\{\Pb_{m,h}\}_{h=1}^H$ are the unknown transition probabilities, and $r=\{r_{m,h}\}_{h=1}^H$ are the unknown reward functions specific to task $m$. For $(m,h)\in[M]\times[H]$, $r_{m,h}(s,a)$
denotes the reward function of task $m$ at step $h$, whose range is assumed to be in $[0,1]$, and $\Pb_{m,h}(s^\prime|s,a)$ denotes the probability of transitioning to state $s^\prime$ upon playing action $a$ at state $s$ while solving task $m$ at step $h$. To simplify the notation, for any function $f$, we write $\mathbb{P}_{m,h}[f](s,a)\coloneqq\mathbb{E}_{s^\prime\sim\mathbb{P}_{m,h}(.|s,a)}[f(s^\prime)]$.

\paragraph{Policy and value functions.}
A policy $\pi=\{\pi_h\}_{h=1}^H$ is a sequence where $\pi_h:\Sc\to\Ac$ determines the agent's action at step $h$. Given $\pi$, we define state value function of task $m$ as
$ V_{m,h}^\pi(s)\coloneqq\mathbb{E}[\sum_{h^\prime=h}^H r_{m,h^\prime}\left(s_{h^\prime},\pi_{h^\prime}(s_{h^\prime}))\vert s_h=s\right]$, where the expectation is with respect to policy $\pi$ and transition probability $\mathbb{P}_m$. We also define its action-value function as $Q_{m,h}^\pi(s,a) \coloneqq r_{m,h}(s,a) + \mathbb{P}_{m,h}[V^\pi_{m,h+1}(.)](s,a)$, where $Q_{m,H+1}^\pi=0$. We denote the optimal policy of task $m$ as $\pi_{m,h}^\ast(s) \coloneqq \sup_{\pi} V_{m,h}^\pi(s)$, and let $V_{m,h}^\ast \coloneqq V_{m,h}^{\pi^\ast}$ and $Q_{m,h}^\ast \coloneqq Q_{m,h}^{\pi^\ast}$ denote the optimal value functions associated with task $m$. Lastly, we recall the Bellman equation of the optimal policy: 
\begin{align}
Q_{m,h}^\ast(s,a)=r_{m,h}(s,a)+\mathbb{P}_{m,h}[V^\ast_{m,h+1}(.)](s,a),\quad
V_{m,h}^\ast(s)=\max_{a\in\Ac}Q_{m,h}^\ast(s,a)\label{eq:bellmanforoptimal}.
\end{align}

\paragraph{Interaction protocol.} We consider a network of $N$ agents acting cooperatively to efficiently solve the above-stated $M$ tasks. The learning framework consists of multiple learning \emph{rounds}, each of which consists of multiple \emph{episodes}. At each round $t$, each agent $i\in [N]$ is given a task $m_{i,t}\sim\Unif([M])$, whose ID is \emph{not} known to the agent. Agent $i$ interacts with task $m_{i,t}$ in a number of episodes and collects trajectory $s_{1,i}^{k,t}, a_{1,i}^{k,t}, r_{1,i}^{k,t}, s_{2,i}^{k,t}, a_{2,i}^{k,t}, r_{2,i}^{k,t}, \ldots, s_{H,i}^{k,t}, a_{H,i}^{k,t}, r_{H,i}^{k,t}$ at episode $k$, where the initial state $s_{1,i}^{k,t}$ is a fixed initial state $s_0$. The agents are also allowed to communicate with each other via a central server. Both policies and the communicated information of each agent may only depend on previously observed rewards and communication received from other agents.

\paragraph{Linear Function Approximation.}
We focus on MDPs
with linear transition kernels and reward functions \citep{jin2020provably,wang2020reward, yang2019sample}
that are encapsulated in the following assumption.

\begin{definition}\label{def:linearMDP}
$\Mc=(\Sc,\Ac, H,\Pb, r)$ is a linear MDP with feature map $\phib:\Sc\times\Ac\rightarrow \mathbb{R}^d$ if for any $h\in[H]$, 
there exist a vector $\etab_{h}$ and $d$ measures $\mub_{h}\coloneqq[{\mu_{h}}^{(1)},\ldots,{\mu_{h}}^{(d)}]^\top$ over $\Sc$ such that 
 $\Pb_{h}(.|s,a)=\left\langle \mub_{h}(.), \phib(s,a)\right\rangle$ and  $r_{h}(s,a)=\left\langle\etab_{h},\phib(s,a)\right\rangle$, for all $(s,a)\in\Sc\times\Ac$. Without loss of generality, $\norm{\phib(s,a)}_2\leq 1$, $\norm{\mub_{h}(s)}_2\leq \sqrt{d}$, and $\norm{\etab_{h}}_2\leq \sqrt{d}$ for all $(s,a,h)\in\Sc\times\Ac\times[H]$. 
\end{definition}

\begin{myassum}[Linear MDPs]\label{assum:linearMDP}
For each task $m\in[M]$, $\Mc_m=(\Sc,\Ac, H,\Pb_m, r_m)$ is a linear MDP with feature map $\phib:\Sc\times\Ac\rightarrow \mathbb{R}^d$.
\end{myassum}

Here, we introduce a task-separability assumption that allows agents to distinguish and identify different tasks they are assigned throughout the learning process. 

\begin{myassum}[Task separability]\label{assum:separ}
There exists a known positive constant $\csep>0$, such that for any task pair $(m,m^\prime)\in[M]\times[M], m\neq m^\prime$ it holds that $\abs{V^\ast_{m,1}(s_0) - V^\ast_{m^\prime,1}(s_0)}>\csep$.
\end{myassum}

\paragraph{Goal.}
We say a policy $\pi$ is $\epsilon$-optimal with respect to task $m$ if $V_{m,1}^\pi(s_0)\geq V_{m,1}^\ast(s_0)-\epsilon$. The goal is to design a cooperative and exploratory algorithm that will be run by all the agents in parallel and when it stops, all the agents have access to $\epsilon$-optimal policies for all tasks while it minimizes the total number of episodes during the exploration phase at which a policy must be executed.


\section{Distributed Multi-Task LSVI}\label{sec:alg}
In this section, we give a description of our algorithm, Distributed Multi-Task Least Value Iteration (\distMT), presented in Algorithm \ref{alg:main} for a single agent $i$, and is run in parallel by all $N$ agents. This algorithm is inspired by the exploration phase for the reward-free setting in \citet{wang2020reward}. At the beginning of each round, after task allocation, every agent first needs to determine if its assigned task has been already solved or not. However, since the task labels are unknown, communicating task labels is not an option to figure out whether a task has been solved or not. Therefore, agents need to communicate another measure that under Assumption \ref{assum:separ} will potentially help agents distinguish and identify tasks. In view of this, when tasks are allocated at the beginning of learning round $t$, each agent $i$ first needs to spend $K_1$ episodes to calculate certain statistics of its corresponding task's unknown parameters (Lines \ref{line:K1explore} and \ref{line:K1plan}). These statistics then will be shared with the server, who relying on Assumption \ref{assum:separ} and checking a certain separability measure (Line \ref{line:cond}), informs agent $i$ whether its assigned task has been solved before, and if so, lets it know of its task's label. In such cases, agent $i$ will screen the look-up table $F$, which is continuously being updated and shared with all the agents by the server to find its task's label and its corresponding $\epsilon$-optimal policy. If the server determines that a task assigned to agent $i$ has not been solved before, it lets agent $i$ know that it needs to spend another $K_2$ episodes of exploration to learn the $\epsilon$-optimal policy of this new task (Lines \ref{line:K2explore} and \ref{line:K2plan}). Agent $i$ then share the necessary statistics to calculate this policy with the server. At the end of round $t$, when the server has gathered the necessary information from all the agents, it updates two look-up tables $G$, which includes statistics determining task-separability and $F$, which includes task labels and statistics determining their corresponding $\epsilon$-optimal policies (Line \ref{line:lookup tables}). Finally, agent $i$ will receive all the changes made to $F$ by the server.


\begin{algorithm}[ht]
\caption{\distMT($\delta,\epsilon$) for agent $i$}\label{alg:main}
\DontPrintSemicolon
   {\bf Set:} $\beta_1 = c_{\beta_1} Hd\sqrt{\log( dHM\delta^{-1}\csep^{-1})}$ for some $c_{\beta_1}>0$, $K_1 = c_{K_1}d^3H^6\log(dHM\delta^{-1}\csep^{-1})/\csep^2$ for some $c_{K_1}>0$, $\beta_2 = c_{\beta_2} Hd\sqrt{\log( dHM\delta^{-1}\epsilon^{-1})}$ for some $c_{\beta_1}>0$, $K_2 = c_{K_2}d^3H^6\log(dHM\delta^{-1}\csep^{-1})/\epsilon^2$ for some $c_{K_2}>0$, $T = \frac{6M\log(M/\delta)}{N}, \ell=0$, $G_m = F_m = \emptyset,~\forall m\in[M]$\;
   \For{{\rm rounds} $t=1,\ldots, T$}{
    $\Dc_i^{1,t} = \ExpPh(\beta_1,K_1)$ \quad \quad \textcolor{blue}{with unknown parameters $\mathbb{P}_{m_{i,t}}$ and $r_{m_{i,t}}$}\label{line:K1explore}\;
    $\left(\{(\thetab_{h,i}^{1,t}, \Lambdab_{h,i}^{1,t})\}_{h\in[H]}, V_{1,i}^{1,t}(s_0) \right)= \Planning(\Dc_i^{1,t})$\label{line:K1plan}\;
    Send $\left(\{(\thetab_{h,i}^{1,t}, \Lambdab_{h,i}^{1,t})\}_{h\in[H]}, V_{1,i}^{1,t}(s_0) \right)$ to the server\;
    \If{$\exists m\in[\ell]$ such that for all $V\in G_m$, $\abs{V- V_{1,i}^{1,t}(s_0)}\leq \csep/2$\label{line:cond} }{
    Server informs agent $i$ of the task label $m_{i,t} = m$ \quad \quad \textcolor{blue}{Agent has already access to the $\epsilon$-optimal policy for task $m_{i,t}$}\;
    Let $\{(\thetab_h, \Lambda_h)\}_{h\in[H]} = F_m$\;
    Return policy $\pi_i^t = \{\pi_{h,i}^t\}_{h=1}^H$, where $\pi_{h,i}^t(.) = \argmax_{a\in\Ac}Q_h(.,a)$, $Q_h(.,.) =  \min\left\{\langle\thetab_h,\phib(.,.)\rangle+u_h(.,.),H\right\}$,  $u_h(.,.) = \min\left\{\beta_2\norm{\phib(.,.)}_{\Lambdab_h^{-1}},H\right\}$.
} 
\Else
{

$\Dc_i^{2,t} = \ExpPh(\beta_2,K_2)$ \quad \quad \textcolor{blue}{with unknown parameters $\mathbb{P}_{m_{i,t}}$ and $r_{m_{i,t}}$}\label{line:K2explore}\;
$\left(\{(\thetab_{h,i}^{2,t}, \Lambdab_{h,i}^{2,t})\}_{h\in[H]}, V_{1,i}^{2,t}(s_0)\right)= \Planning(\Dc_i^{2,t})$\label{line:K2plan}\;
Send $\{(\thetab_{h,i}^{2,t}, \Lambdab_{h,i}^{2,t})\}_{h\in[H]}$ to the server\;
 Return policy $\pi_i^t = \{\pi_{h,i}^t\}_{h=1}^H$, where $\pi_{h,i}^t(.) = \argmax_{a\in\Ac}Q_h(.,a)$, $Q_h(.,.) =  \min\left\{\langle\thetab_{h,i}^{2,t},\phib(.,.)\rangle+u_h(.,.),H\right\}$,  $u_h(.,.) = \min\left\{\beta_2\norm{\phib(.,.)}_{\left(\Lambdab_{h,i}^{2,t}\right)^{-1}},H\right\}$
}
Server updates $\{G_m\}_{m=1}^M, \{F_m\}_{m=1}^M, f = \Group(\{G_m\}_{m=1}^M, \{F_m\}_{m=1}^M, \ell, t)$\label{line:lookup tables}\;
Receive $\{F_m\}_{m=\ell+1}^{\ell+f}$ from the server.\;
$\ell = \ell+f$
}

\end{algorithm}

\begin{algorithm}[ht]
\caption{\ExpPh($\beta$, $K$)}\label{alg:explore}
\DontPrintSemicolon
  {\bf Unknown parameters:} $\mathbb{P}, r$\;
   {\bf Set:} $Q_{H+1}^k(.,.,.)=0,~\forall k\in[K]$\;
   \For{ $k=1,\ldots, K$}{
   Observe the initial state $s_1^k = s_0$\;
   \For{$h=H,H-1,\ldots,1$}{
   $\Lambdab_h^k = \Iden_{d}+\sum_{\tau=1}^{k-1}\phib_h^\tau{\phib_h^\tau}^\top$\;
   $u_h^k(.,.) = \min\left\{\beta\norm{\phib(.,.)}_{\left(\Lambdab_h^k\right)^{-1}},H\right\}$\;
   Define the exploration-driven reward function $\tilde u_h^k(.,.) = u_h^k(.,.)/H$\;
   $\thetab_h^k=(\Lambdab_h^k)^{-1}\sum_{\tau=1}^{k-1}\phib_h^\tau[
    V_{h+1}^k(s_{h+1}^\tau)]$\;
    $Q_h^k(.,.) =  \min\left\{\langle\thetab_h^k,\phib(.,.)\rangle+u_h^k(.,.)+\tilde u_h^k(.,.),H\right\}$ and $V_h^k(.)=\max_{a\in\Ac}Q_h^k(.,a)$\;
    $\pi_h^k(.) = \argmax_{a\in\Ac} Q_h^k(.,a)$
   }

   \For{$h=1,2,\ldots,H$}{
   Take action $a_h^k = \pi_h^k(s_h^k)$, and observe $s_{h+1}^k\sim \mathbb P_h^k(.|s_h^k,a_h^k)$ and $r_h^k = r_h(s_h^k,a_h^k)$
   }
   
}
{\bf Return:} $\Dc = \{(s_h^k,a_h^k,r_h^k)\}_{(h,k)\in[H]\times[K]}$
\end{algorithm}

\begin{algorithm}[ht]
\caption{\Planning($\{(s_h^k,a_h^k,r_h^k)\}_{(h,k)\in[H]\times[K]}$)}\label{alg:planning}
\DontPrintSemicolon
   \For{$h=H,H-1,\ldots,1$}{
   $\Lambdab_h \coloneqq \Iden_{d}+\sum_{\tau=1}^{K}\phib_h^\tau{\phib_h^\tau}^\top$\;
   $u_h(.,.) = \min\left\{\beta\norm{\phib(.,.)}_{\Lambdab_h^{-1}},H\right\}$\;
   $\thetab_h=(\Lambdab_h)^{-1}\sum_{\tau=1}^{K}\phib_h^\tau[r_h^\tau+
   V_{h+1}(s_{h+1}^\tau)]$\;
    $Q_h(.,.) =  \min\left\{\langle\thetab_h,\phib(.,.)\rangle+u_h(.,.),H\right\}$ and $V_h(.)=\max_{a\in\Ac}Q_h(.,a)$
   }

{\bf Return:} $\left(\{(\thetab_h, \Lambdab_h)\}_{h\in[H]}, V_1(s_0)\right)$

\end{algorithm}

\begin{algorithm}[ht]
\caption{\Group for the server($\{G_m\}_{m=1}^M, \{F_m\}_{m=1}^M, \ell, t$)}
\DontPrintSemicolon
   {\bf Initialization:}  $f = 0$\;
   \For{$i=1,\ldots N$}{

   \If{$\exists m\in[\ell]$ such that for all $V\in G_m$, $\abs{V- V_{1,i}^{1,t}(s_0)}\leq \csep/2$}{
   Add $V_{1,i}^{1,t}(s_0)$ to $G_m$.
   }
   \Else
   {
$\ell = \ell+1$, $f = f+1$\;
   Add $V_{1,i}^{1,t}(s_0)$ to $G_\ell$ and add $\left(\{(\thetab_{h,i}^{2,t}, \Lambdab_{h,i}^{2,t})\}_{h\in[H]},\ell\right)$ to $F_\ell$.
   }
   }
   Return $\{G_m\}_{m=1}^M, \{F_m\}_{m=1}^M, f$
\end{algorithm}

\begin{theorem}\label{thm:main}
Let Assumptions \ref{assum:linearMDP} and \ref{assum:separ} hold. Let $\delta\in(0,1)$, $\epsilon\in(0,1)$, $T = 6M\log(M/\delta)/N$, $K_1 = c_{K_1}d^3H^6\log(dHM\delta^{-1}\csep^{-1})/\csep^2$ for sufficiently large $c_{K_1}>0$, and $K_2 = c_{K_2}d^3H^6\log(dHM\delta^{-1}\csep^{-1})/\epsilon^2$ for sufficiently large  $c_{K_2}>0$. Then, if all the agents run Algorithm \ref{alg:main} in parallel, with probability at least $1-\delta$, every agent $i\in[N]$ at every round $t\in[T]$ returns an $\epsilon$-optimal policy $\pi^t_i$ of a given task with unknown label $m_{i,t}$ and at the end of $T$ rounds, every agent $i\in[N]$ has access to all the tasks' $\epsilon$-optimal policies using a total number of at most $T(K_1+K_2)$ episodes.
\end{theorem}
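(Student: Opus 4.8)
The plan is to decompose the claim into three parts: (i) a per-exploration-phase correctness guarantee, saying that after $K_1$ or $K_2$ episodes of $\ExpPh$ followed by $\Planning$, the value estimate $V_{1,i}^{j,t}(s_0)$ is close to $V^\ast_{m_{i,t},1}(s_0)$ and the returned policy (when built from $K_2$-data) is $\epsilon$-optimal; (ii) a separability/bookkeeping argument showing the server correctly identifies whether an incoming task has been seen before, so the look-up tables $G$ and $F$ are always consistent with the true task identities; and (iii) a coupon-collector-style argument showing $T = 6M\log(M/\delta)/N$ rounds suffice for all $M$ tasks to have been drawn (and hence solved) at least once with probability $1-\delta$, after which every agent has all $\epsilon$-optimal policies. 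The episode count $T(K_1+K_2)$ is then immediate since each round costs at most $K_1+K_2$ episodes for a single agent.

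For part (i), I would invoke the analysis of the reward-free exploration phase of \citet{wang2020reward}, which this algorithm is modeled on: with $K$ episodes and the bonus scaling $\beta = \tilde\Theta(Hd)$, one gets that with high probability $\sum_{h}\|\phib(\cdot,\cdot)\|_{(\Lambdab_h)^{-1}}$ is small on the visited distribution, which yields $|V_{1,i}(s_0) - V^\ast_{m,1}(s_0)| \le O(H^{3/2}d^{3/2}/\sqrt{K/(dH^3)})$-type bounds; plugging $K_1 \propto d^3H^6/\csep^2$ gives accuracy better than $\csep/4$, and $K_2 \propto d^3H^6/\epsilon^2$ gives accuracy better than $\epsilon$ for both the value and (via the standard LSVI optimism + simulation-lemma argument) the greedy policy extracted from $\{(\thetab_h,\Lambdab_h)\}$. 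A union bound over all $N$ agents, all $T$ rounds, and the two phases, absorbing the factor into the logarithmic terms of $K_1,K_2$, keeps the total failure probability below $\delta/2$.

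For part (ii), condition on the event from part (i). Then for any two rounds/agents handling tasks $m$ and $m'$, the stored values satisfy $|V - V^\ast_{m,1}(s_0)| \le \csep/4$ and $|V' - V^\ast_{m',1}(s_0)| \le \csep/4$. If $m = m'$ then $|V - V'| \le \csep/2$, so the test in Line \ref{line:cond} (and its twin in $\Group$) correctly groups them; if $m \ne m'$ then by Assumption \ref{assum:separ} $|V^\ast_{m,1}(s_0) - V^\ast_{m',1}(s_0)| > \csep$, so $|V - V'| > \csep/2$ and the test correctly separates them. Hence the equivalence classes maintained by the server are exactly the true task identities, $F_m$ contains a valid $\epsilon$-optimal policy description for each already-solved task, and an agent told "task $m_{i,t} = m$" indeed receives an $\epsilon$-optimal policy. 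One subtlety to handle carefully: the test compares against \emph{every} $V \in G_m$, so I should note that all entries of $G_m$ are within $\csep/4$ of the same $V^\ast_{m,1}(s_0)$, making the "for all" quantifier harmless; and a new task that genuinely hasn't been solved fails every test (again by separability) and correctly triggers the $K_2$ branch.

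For part (iii), note that in each round $N$ fresh tasks are drawn i.i.d.\ from $\Unif([M])$, so after $T$ rounds we have $NT = 6M\log(M/\delta)$ i.i.d.\ draws; the probability a fixed task is missed in all of them is $(1-1/M)^{NT} \le e^{-NT/M} = e^{-6\log(M/\delta)} \le \delta/M$ (even $(\delta/M)^6$), and a union bound over the $M$ tasks gives that all tasks are drawn with probability $\ge 1 - \delta/2$. Combining with parts (i)--(ii) via a final union bound yields the overall $1-\delta$ guarantee: every task is solved by some agent in some round by time $T$, the corresponding $\epsilon$-optimal policy is in $F$, and each agent receives all updates to $F$, so at the end every agent has $\epsilon$-optimal policies for all $M$ tasks; moreover at every round every returned $\pi_i^t$ is $\epsilon$-optimal (either freshly learned or pulled from $F$). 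The total episode count run by a single agent is at most $T(K_1+K_2)$ since in each round it runs $\ExpPh$ with $K_1$ episodes and, in the worst case, also with $K_2$ episodes.

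\textbf{Main obstacle.} The substantive step is part (i): porting the reward-free exploration guarantee of \citet{wang2020reward} to the precise constants here and confirming that the \emph{same} exploration dataset simultaneously certifies (a) an accurate \emph{estimate} of $V^\ast_{m,1}(s_0)$ used for task identification and (b) an actually $\epsilon$-optimal \emph{policy}. The separability bookkeeping (ii) and the coupon-collector bound (iii) are routine once (i) is in hand.
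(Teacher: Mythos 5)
Your proposal is correct and follows essentially the same decomposition as the paper: part (i) corresponds to the paper's key Lemma \ref{lemm:keylemma} (optimism plus $\epsilon$-accuracy of the planned policy, via the reward-free analysis of \citet{wang2020reward}) instantiated at accuracies $O(\csep)$ and $\epsilon$ as in Lemmas \ref{lemm:confidence1} and \ref{lemm:confidence2}, part (ii) is exactly the triangle-inequality argument of Lemma \ref{lemm:separ}, and part (iii) matches Lemma \ref{lemm:T} (you use a direct $(1-1/M)^{NT}$ computation where the paper invokes a multiplicative Chernoff bound, but both are the same coupon-collector argument). No gaps.
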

\subsection{Proof Sketch of Theorem \ref{thm:main}}
In this section, we give a proof sketch for Theorem \ref{thm:main}. We start by introducing the following lemmas, which are the foundation of our analysis, and whose complete proofs are given in Appendix \ref{appx:proofofthm}.
The following lemma shows that the estimated value functions are optimistic with high probability and close to the optimal value functions. It confirms that the condition in Line \ref{line:cond} of Algorithm \ref{alg:main} enable agents to identify the tasks and prevents them from solving a task that has already been solved, and an agent can directly inquire about an $\epsilon$-optimal policy of such a task from the server. 

\begin{lemma} \label{lemm:confidence1}
Under the setting of Theorem \ref{thm:main}, for all $(i,t)\in[N]\times[T]$, with probability at least $1-\delta$, it holds that
\begin{align}
    0\leq V_{1,i}^{1,t}(s_0)-V_{m_{i,t},1}^\ast(s_0)\leq \csep/8.
\end{align}
\end{lemma}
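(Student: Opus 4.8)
\textbf{Proof plan for Lemma \ref{lemm:confidence1}.}
The statement concerns a single call of $\ExpPh(\beta_1,K_1)$ followed by $\Planning$, run by agent $i$ at round $t$ on the (fixed but unknown) task $m_{i,t}$. The plan is to treat this as an instance of the reward-free exploration analysis of \citet{wang2020reward}: the $K_1$ exploration episodes of Algorithm \ref{alg:explore} use the exploration-driven pseudo-rewards $\tilde u_h^k = u_h^k/H$ to drive the cumulative uncertainty $\sum_h \norm{\phib_h^k}_{(\Lambdab_h^k)^{-1}}$ down, and then $\Planning$ performs an optimistic LSVI pass on the true rewards $r_{m_{i,t}}$ over the collected dataset. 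So I would first quote (or re-derive from the linear-MDP structure in Assumption \ref{assum:linearMDP}) the standard self-normalized concentration bound: with probability at least $1-\delta'$, uniformly over $h$ and over the value functions $V_{h+1}$ appearing in $\Planning$, the one-step Bellman error is controlled, i.e. $\abs{\langle\thetab_h,\phib(s,a)\rangle - (r_{m_{i,t},h}(s,a)+\Pb_{m_{i,t},h}[V_{h+1}](s,a))}\le u_h(s,a)$, using the choice $\beta_1 = c_{\beta_1}Hd\sqrt{\log(dHM\delta^{-1}\csep^{-1})}$ (the extra $\log M$ and $\log\csep^{-1}$ inside the log absorb a union bound over the $M$ tasks and a covering-number argument for the bonus-augmented value class). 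This is the step I expect to be the most delicate to state cleanly, because it requires the usual $\epsilon$-net over the parametrized $Q$-functions plus a uniform-over-$\Lambdab$ argument; but it is entirely standard given the linear structure, so I would cite it rather than reproduce it.

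\textbf{Optimism (lower bound).} Given the good concentration event, a backward induction on $h=H,H-1,\ldots,1$ shows $V_{1}(s_0)\ge V_{m_{i,t},1}^\ast(s_0)$, where $V_1$ is the value returned by $\Planning$ (this is exactly $V_{1,i}^{1,t}(s_0)$). The inductive step is the routine optimism argument: if $Q_{h+1}\ge Q_{m_{i,t},h+1}^\ast$ pointwise then $V_{h+1}\ge V_{m_{i,t},h+1}^\ast$, and since $\langle\thetab_h,\phib(s,a)\rangle + u_h(s,a)\ge r_{m_{i,t},h}(s,a)+\Pb_{m_{i,t},h}[V_{h+1}](s,a)\ge r_{m_{i,t},h}(s,a)+\Pb_{m_{i,t},h}[V_{m_{i,t},h+1}^\ast](s,a) = Q_{m_{i,t},h}^\ast(s,a)$, the truncation at $H$ is harmless because $Q_{m_{i,t},h}^\ast\le H$. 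This gives $0\le V_{1,i}^{1,t}(s_0)-V_{m_{i,t},1}^\ast(s_0)$.

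\textbf{Accuracy (upper bound).} For the upper bound I would bound $V_{1,i}^{1,t}(s_0)-V_{m_{i,t},1}^\ast(s_0)$ by (a constant times $H$ times) the expected cumulative bonus along the optimal policy, and then control this by the on-policy exploration guarantee: after $K_1$ episodes of $\ExpPh$, the standard elliptical-potential / pigeonhole argument yields $\frac{1}{K_1}\sum_{k=1}^{K_1}\sum_{h=1}^H \E[\min\{\beta_1\norm{\phib_h^k}_{(\Lambdab_h^k)^{-1}},H\}]\lesssim \sqrt{d^3H^4\log(\cdot)/K_1}\cdot H$ (the $H$ factors coming from the value range and the $1/H$ rescaling of the driving reward), which after plugging $K_1 = c_{K_1}d^3H^6\log(dHM\delta^{-1}\csep^{-1})/\csep^2$ is at most $\csep/8$ for $c_{K_1}$ large enough. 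The one slightly nonstandard bookkeeping point is converting the reward-free ``uniform over all reward functions'' guarantee of \citet{wang2020reward} into the single-reward statement here and making the $\csep$-scaled target appear; this is just a matter of choosing $c_{K_1}$. Finally I would take a union bound over all $(i,t)\in[N]\times[T]$ — there are $NT = 6M\log(M/\delta)$ such pairs — and absorb the resulting $\log(NT)=\log(6M\log(M/\delta))$ factor into the $\log(dHM\delta^{-1}\csep^{-1})$ already present in $\beta_1$ and $K_1$, yielding the claim with overall failure probability $\delta$.
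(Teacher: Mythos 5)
Your plan is essentially the paper's proof: the paper establishes Lemma \ref{lemm:confidence1} via an auxiliary result (Lemma \ref{lemm:keylemma}) whose first part proves optimism by the same backward induction on the self-normalized concentration event (Lemma D.4 of \citet{jin2020provably} plus a covering argument over the bonus-augmented value class), and whose second part bounds $V_1(s_0)-V_1^{\pi}(s_0)$ by the reward-free exploration guarantee of \citet{wang2020reward} (their Lemma 3.2), instantiated with target accuracy $\csep/8$ through the choice of $K_1$, followed by $V^{\pi}\le V^\ast$ and a union bound over $(i,t)$. One minor correction: the telescoping for the upper bound unrolls along the greedy policy returned by \Planning rather than along $\pi^\ast$ (since $V_h(s)=\max_a Q_h(s,a)\ge Q_h(s,\pi_h^\ast(s))$ points the wrong way for that decomposition), but this is harmless here because the reward-free bound controls the expected cumulative bonus uniformly over all policies.
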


Using Lemma \ref{lemm:confidence1}, in the following lemma, we design the condition in Line \ref{line:cond}.
\begin{lemma}\label{lemm:separ}
Let $K_1$ be chosen as in Lemma \ref{lemm:confidence1}. Then, conditioned on the event introduced in Lemma \ref{lemm:confidence1}, if $\abs{V_{1,i}^{1,t}(s_0)-V_{1,j}^{1,t^\prime}(s_0)}>\csep/2$, then $m_{i,t}$ and $m_{j,t^\prime}$ are two different tasks; otherwise, they are the same tasks.
\end{lemma}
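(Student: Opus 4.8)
\textbf{Proof proposal for Lemma \ref{lemm:separ}.}
The plan is to reduce the claim to a triangle-inequality argument built on top of Lemma \ref{lemm:confidence1}. Fix two index pairs $(i,t)$ and $(j,t^\prime)$ and condition on the event of Lemma \ref{lemm:confidence1}, which by a union bound holds simultaneously for all $(i,t)\in[N]\times[T]$ with probability at least $1-\delta$. Under this event we have the two-sided bound $0\leq V_{1,i}^{1,t}(s_0)-V_{m_{i,t},1}^\ast(s_0)\leq \csep/8$ and likewise $0\leq V_{1,j}^{1,t^\prime}(s_0)-V_{m_{j,t^\prime},1}^\ast(s_0)\leq \csep/8$. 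So each estimated initial value lies within $\csep/8$ of the true optimal value of the corresponding (unknown-label) task.

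First I would prove the contrapositive of the second assertion: if $m_{i,t}$ and $m_{j,t^\prime}$ are the \emph{same} task $m$, then $\abs{V_{1,i}^{1,t}(s_0)-V_{1,j}^{1,t^\prime}(s_0)}\leq \csep/2$ (indeed even $\leq \csep/4$). This is immediate from the triangle inequality: both estimates are within $\csep/8$ of the common value $V_{m,1}^\ast(s_0)$, so they are within $\csep/4$ of each other, which is strictly less than $\csep/2$. Contrapositively, if the gap strictly exceeds $\csep/2$ the tasks cannot be the same, i.e.\ $m_{i,t}\neq m_{j,t^\prime}$; this proves the first clause. For the second clause I would prove the contrapositive direction as well: suppose $m_{i,t}\neq m_{j,t^\prime}$. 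Then by Assumption \ref{assum:separ}, $\abs{V_{m_{i,t},1}^\ast(s_0)-V_{m_{j,t^\prime},1}^\ast(s_0)}>\csep$. Combining with the two $\csep/8$ estimation errors via the triangle inequality in the other direction,
\begin{align}
\abs{V_{1,i}^{1,t}(s_0)-V_{1,j}^{1,t^\prime}(s_0)}
&\geq \abs{V_{m_{i,t},1}^\ast(s_0)-V_{m_{j,t^\prime},1}^\ast(s_0)} - \frac{\csep}{8} - \frac{\csep}{8}
> \csep - \frac{\csep}{4} = \frac{3\csep}{4} > \frac{\csep}{2}.\nonumber
\end{align}
Hence whenever the observed gap is at most $\csep/2$, the tasks must coincide, completing the equivalence.

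Putting the two contrapositives together yields exactly the stated dichotomy: on the good event of Lemma \ref{lemm:confidence1}, the gap exceeds $\csep/2$ if and only if the tasks differ, so the server's test in Line \ref{line:cond} correctly classifies whether a freshly computed $V_{1,i}^{1,t}(s_0)$ matches a previously recorded task value in some $G_m$. I do not expect a genuine obstacle here; the only thing to be careful about is bookkeeping the union bound so that Lemma \ref{lemm:confidence1} is invoked once over all $(i,t)$ pairs rather than separately (which would cost extra factors in $\delta$), and checking that the numerical slack is comfortable — the bound $\csep/8$ on each estimation error leaves a clean separation between the ``same'' regime ($\leq \csep/4$) and the ``different'' regime ($> 3\csep/4$), with the threshold $\csep/2$ strictly between them, so the choice of constants in $K_1$ (which drives the $\csep/8$ accuracy through Lemma \ref{lemm:confidence1}) is what makes the argument go through.
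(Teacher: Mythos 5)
Your proposal is correct and follows essentially the same route as the paper: both directions are triangle-inequality arguments combining the $\csep/8$ accuracy from Lemma \ref{lemm:confidence1} with the $\csep$ separation from Assumption \ref{assum:separ}, and your numerical constants ($\csep/4$ for the ``same task'' regime, $3\csep/4$ for the ``different task'' regime) match the paper's. The only cosmetic difference is that you phrase each direction as a contrapositive while the paper argues directly from the hypothesis on the observed gap; the content is identical.
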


\begin{lemma} \label{lemm:confidence2}
Under the setting of Theorem \ref{thm:main}, and conditioned on the event defined in Lemma \ref{lemm:separ}, for all $(i,t)\in[N]\times[T]$, with probability at least $1-\delta$, it holds that
\begin{align}
    0\leq V_{m_{i,t},1}^\ast(s_0) - V_{m_{i,t},1}^{\pi_i^t}(s_0)\leq \epsilon.
\end{align}
\end{lemma}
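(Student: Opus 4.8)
The plan is to prove Lemma~\ref{lemm:confidence2} by following the standard optimism-based analysis for reward-free/exploration-phase LSVI (as in \citet{wang2020reward}), now applied to the second exploration phase that uses $K_2$ episodes of \ExpPh\ followed by \Planning. First I would recall that, conditioned on the events of Lemmas~\ref{lemm:confidence1} and~\ref{lemm:separ}, the separability condition in Line~\ref{line:cond} correctly partitions the rounds: whenever agent $i$ at round $t$ reaches the \textbf{Else} branch, the task $m_{i,t}$ has genuinely not been solved before, so the policy $\pi_i^t$ it returns is exactly the greedy policy with respect to the $Q$-functions built from $(\thetab_{h,i}^{2,t},\Lambdab_{h,i}^{2,t})$ produced by $\Planning(\Dc_i^{2,t})$; and whenever it reaches the \textbf{If} branch, it returns the policy stored in $F_m$, which was produced in exactly the same way by whichever agent first solved task $m$. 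Hence it suffices to show the bound $0\le V_{m,1}^\ast(s_0)-V_{m,1}^{\pi}(s_0)\le\epsilon$ for a single generic run of $K_2$ exploration episodes plus planning on a fixed (but unknown-labelled) task $m$, and the conclusion for all $(i,t)$ follows by a union bound over the at most $M$ distinct tasks that are ever solved (the sample complexity and failure probability being absorbed into the $\Otilde$ and $\delta$).

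The core argument has three ingredients. \textbf{(i) Concentration of the ridge estimators.} Under Assumption~\ref{assum:linearMDP}, for every $h$, the quantity $\thetab_h=(\Lambdab_h)^{-1}\sum_{\tau=1}^{K_2}\phib_h^\tau[r_h^\tau+V_{h+1}(s_{h+1}^\tau)]$ estimates the linear parameter of $r_{m,h}+\Pb_{m,h}V_{h+1}$; a self-normalized concentration inequality (Azuma/Bernstein for vector-valued martingales, combined with a covering-number argument over the class of value functions of the form $V(\cdot)=\min\{\max_a\langle\thetab,\phib(\cdot,a)\rangle+\beta_2\|\phib(\cdot,a)\|_{\Lambda^{-1}},H\}$) gives, with probability $\ge 1-\delta$, the bonus-type bound $|\langle\thetab_h,\phib(s,a)\rangle-(r_{m,h}+\Pb_{m,h}V_{h+1})(s,a)|\le \beta_2\|\phib(s,a)\|_{\Lambda_h^{-1}}=u_h(s,a)$ for all $(s,a,h)$, which is where the choice $\beta_2=c_{\beta_2}Hd\sqrt{\log(dHM\delta^{-1}\epsilon^{-1})}$ comes in. \textbf{(ii) Optimism.} Using (i) and backward induction on $h$, one shows $Q_h(s,a)\ge Q_{m,h}^\ast(s,a)$ and hence $V_h(s)\ge V_{m,h}^\ast(s)$ for all $(s,a,h)$; in particular $V_1(s_0)\ge V_{m,1}^\ast(s_0)$, and since $\pi=\pi_i^t$ is greedy w.r.t.\ these $Q_h$, a second induction gives $V_{m,1}^\ast(s_0)-V_{m,1}^{\pi}(s_0)\le V_1(s_0)-V_{m,1}^{\pi}(s_0)\le \mathbb{E}_{\pi}[\sum_{h=1}^H 2u_h(s_h,a_h)]$ (the standard ``value difference equals accumulated bonuses'' telescoping, using that the extra exploration reward $\tilde u_h^k$ is not present in the planning $Q_h$ but the planning and exploration $\Lambda_h$ coincide up to the last sample). \textbf{(iii) Bounding the accumulated bonus via the exploration guarantee.} Because the $K_2$ episodes of \ExpPh\ were run with the exploration-driven rewards $\tilde u_h^k/H$, the elliptical-potential / pigeonhole argument of \citet{wang2020reward} shows that after $K_2=c_{K_2}d^3H^6\log(\cdots)/\epsilon^2$ episodes the exploration policy has driven $\sum_h \beta_2\mathbb{E}_{\pi^{\mathrm{explore}}}\|\phib(s_h,a_h)\|_{\Lambda_h^{-1}}$ below $O(\epsilon)$ uniformly over all policies $\pi$, so in particular $\mathbb{E}_{\pi}[\sum_h 2u_h(s_h,a_h)]\le\epsilon$; this is exactly the step that converts the dimension/horizon-dependent constant $c_{K_2}$ into the target accuracy.

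I expect the main obstacle to be ingredient~(iii) together with the bookkeeping that ties it to~(ii): one must argue that the bonus $u_h$ used in the \emph{planning} phase, which is built from $\Lambdab_h=\Iden_d+\sum_{\tau=1}^{K_2}\phib_h^\tau{\phib_h^\tau}^\top$, is controlled \emph{in expectation under the target policy} by the data that the \emph{exploration} phase collected while maximizing the surrogate reward $\sum_h u_h^k/H$ --- this requires the ``exploration is at least as good as any comparator policy'' optimism in \ExpPh, the simulation/regret decomposition for the surrogate MDP, and a careful application of the elliptical potential lemma $\sum_{k}\|\phib_h^k\|_{(\Lambda_h^k)^{-1}}^2\le 2d\log(1+K_2/d)$, with all the $H$ and $d$ powers tracked so that $K_2\propto d^3H^6\epsilon^{-2}$ genuinely suffices. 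A secondary but routine difficulty is making the union bounds consistent: Lemma~\ref{lemm:confidence2} is stated as holding with probability $1-\delta$ \emph{conditioned} on the event of Lemma~\ref{lemm:separ}, so I would fix the high-probability events of Lemmas~\ref{lemm:confidence1}--\ref{lemm:separ} first, then run the concentration/covering argument of~(i) on the (at most $M$) second-phase datasets, and finally conclude by intersecting all events, absorbing the polylogarithmic blow-up of the union bound into the $\log(dHM\delta^{-1}\epsilon^{-1})$ already present in $\beta_2$ and $K_2$.
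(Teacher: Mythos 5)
Your proposal is correct and takes essentially the same route as the paper: the paper packages your ingredients (i)--(iii) into a single key lemma (Lemma~\ref{lemm:keylemma}) --- self-normalized concentration with a covering argument for the bonus bound, backward-induction optimism, and the reward-free exploration guarantee of \citet{wang2020reward} (their Lemma~3.2, which is exactly your elliptical-potential step) --- and then obtains Lemma~\ref{lemm:confidence2} by chaining \eqref{eq:first} and \eqref{eq:second}. Your explicit handling of the If-branch (policies retrieved from $F_m$ having been produced by the same pipeline) is a detail the paper leaves implicit but does not change the argument.
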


\begin{lemma}\label{lemm:T}
    Under the setting of Theorem \ref{thm:main}, with probability at least $1-\delta$, after $T$ rounds all the tasks have been solved by at least one agent.
\end{lemma}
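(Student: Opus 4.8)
The plan is to reduce the claim to a coupon-collector style argument. The key observation is that a task $m$ gets ``solved'' (i.e., its $\epsilon$-optimal policy is recorded in $F_m$ by the server, via the \Group\ routine) the first time some agent is assigned $m$ in a round where $m$ has not yet been solved. By Lemma~\ref{lemm:separ}, conditioned on the good event of Lemma~\ref{lemm:confidence1}, the server's separability test in Line~\ref{line:cond} correctly distinguishes tasks, so there is no risk of a task being erroneously marked as solved before some agent has actually run the $K_2$-episode exploration on it; hence ``solved'' is well-defined and monotone in $t$. Therefore it suffices to show that with probability at least $1-\delta$, within $T = 6M\log(M/\delta)/N$ rounds, for every $m\in[M]$ at least one of the $NT$ task assignments $\{m_{i,t}\}_{i\in[N],t\in[T]}$ equals $m$.

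The steps I would carry out are as follows. First, fix a task $m\in[M]$. Since the assignments $m_{i,t}\simiid\Unif([M])$ are independent across $i\in[N]$ and $t\in[T]$, the probability that $m$ is never assigned in any of the $NT$ slots is $(1-1/M)^{NT} \le \exp(-NT/M)$. Second, substitute $T = 6M\log(M/\delta)/N$, so $NT/M = 6\log(M/\delta)$, giving $\exp(-6\log(M/\delta)) = (\delta/M)^6 \le \delta/M$ (using $\delta<1$, $M\ge 1$). Third, take a union bound over all $M$ tasks: the probability that some task is never assigned is at most $M\cdot(\delta/M) = \delta$. Fourth, on the complementary event, every task is assigned to some agent at some round $t\le T$; and since ``solved'' is monotone, if task $m$ is unsolved at the start of round $t$ when some agent receives it, that agent necessarily goes to the \texttt{Else} branch (runs \ExpPh$(\beta_2,K_2)$ and \Planning), and the \Group\ routine then records $m$ in $F_\ell$, so $m$ is solved after round $t$; if $m$ was already solved at round $t$, it remains solved. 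Either way $m$ is solved by the end of $T$ rounds. Finally, I would intersect with the high-probability events of the earlier lemmas so that the correctness of the separability test (and hence the meaningfulness of ``solved'') holds throughout; adjusting constants in the $\delta$-budget as needed.

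The only mild subtlety — and the step I would be most careful about — is the interaction between the probabilistic coupon-collector argument and the algorithmic notion of a task being ``solved.'' One must argue that the event ``task $m$ is assigned at some round $t\le T$'' genuinely implies ``task $m$ is solved by round $T$,'' which requires knowing that whenever an agent is assigned an as-yet-unsolved task, the server's test in Line~\ref{line:cond} fails (so the agent actually explores and the task is added to the look-up tables). This is exactly what Lemma~\ref{lemm:separ} gives, conditioned on the event of Lemma~\ref{lemm:confidence1}; so the real content is just bookkeeping of the conditioning, while the probabilistic core is the elementary $(1-1/M)^{NT}$ bound. The choice $T = 6M\log(M/\delta)/N$ is then seen to be exactly (up to the slack factor $6$) what is needed to drive the per-task failure probability below $\delta/M$.
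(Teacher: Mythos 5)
Your proposal is correct and follows essentially the same route as the paper: bound the probability that a fixed task is never assigned in the $NT$ i.i.d.\ uniform draws, plug in $T = 6M\log(M/\delta)/N$, and union bound over the $M$ tasks. The only differences are cosmetic --- you compute the per-task failure probability directly as $(1-1/M)^{NT}\le e^{-NT/M}$ where the paper invokes a multiplicative Chernoff bound on the assignment count $k_m$, and you spell out the ``assigned $\Rightarrow$ solved'' bookkeeping via Lemma~\ref{lemm:separ}, which the paper leaves implicit.
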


Conditioned on the events introduced in Lemmas \ref{lemm:separ}, \ref{lemm:confidence2}, and \ref{lemm:T}, only $T = 6M\log(M/\delta)/N$ rounds of learning are sufficient such that all the agents have access to the $\epsilon$-optimal policies of all tasks $m\in[M]$, which proves the second part of the Theorem \ref{thm:main}'s statement.

\section{Experiments}

The experiments use the OpenAI Gym Atari environments for training and evaluation. Our task setting consists of $N=20$ agents and $M=10$ tasks. Following our proposed algorithm, \distMT, the agents share information among each other through a central server which we also refer to as the hub.
As discussed earlier, the objective is to design a cooperative algorithm so that when the algorithm stops, each agent has access to the $\epsilon$-optimal policies for all the tasks.

In the learning stage, each agent learns a sequence of 10 randomly assigned tasks. The number of learning rounds ($T$) equals 10. Each agent runs Algorithm \ref{alg:main} in parallel. At the beginning of each round, each agent is assigned a task. The agent first needs to check if the assigned task has already been completed by other agents or not. Under Assumption \ref{assum:separ}, we achieve this by letting each agent measure task similarity by using a small neural network, which we refer to as SimNet. SimNet is a small Deep Q-Network (DQN) that consists of only two linear layers with one ReLU in between. Before an agent learns a new task, it always initializes SimNet from the same set of parameters. This allows the agent to be trained to rapidly capture patterns of the new task for 10,000 frames within one minute. Then, we use the updated parameters to compare similarity between parameters of other SimNets shared by neighboring agents on their own past tasks with distance metrics,such as Euclidean distance of parameter matrices. In this fashion, we find out the most similar tasks to our current tasks. The agent then queries features such as replay buffers of those learned tasks to train a Lifelong Learning model.

The Lifelong Learning property of the algorithm is implemented by combining an Elastic Weight Consolidation (EWC) module, which is a continual learning algorithm that effectively slows down catastrophic forgetting of past tasks during the convergence of an ongoing task by selectively retaining elasticity of important parameters, and a DQN algorithm. In particular, in the beginning of the training of a new task, the agent first obtains the experience dataset and initialization parameters for a similar task from its own experience or from other agents, then the agent continues to train its model by interacting with the environment. The existing experience will greatly reduce the training time of the new task. For each new task the agent receives, the agents are able to obtain a concise feature map of the task, and use it to query the central server, i.e., obtain the experience from the neighboring agents who have already seen the task. Each agent queries the central server that hosts the neighbors' SimNet parameters to compare the task similarity with its own SimNet, deciding which relevant task information to acquire from the server. Having obtained a list of “similar” agents to communicate with, the agent decompresses learning representations of those tasks for continual learning its model. If there is no similar task available, the LL agent trains the new task from scratch. Otherwise, it leverages learnable information from neighbors to perform LL. The LL agent frees up memory consumed by shared information. In this way, our proposed approach prevents the agents from solving a task that has already been solved, and an agent can directly inquire about an $\epsilon$-optimal policy of a task that has already been solved by other agents from the hub.

Before learning each task, every agent encounters one of the following scenarios: 1) If neither the agent nor its neighbors has encountered a new task before, the agent needs to interact with the Atari server for at least 10000 frames for learning samples to train the Lifelong Learning model. This is equivalent to single-agent training where the agent needs to train each task from scratch. 2) If the task has been learned by its peers, our agent requests the experience replay buffers (ERB). It leverages learnable information, e.g. by performing memory replay on borrowed ERB locally instead of interacting with the Atari-server, saving large amounts of time in learning from more data in a short frame of time.


We implement our RL algorithm with a deep Q-learning (DQN) framework. In this framework, a deep neural network stores the value of a Q-function, which represents the value of a possible move given an observation. The Q-function takes input a game state (e.g., an image, or an internal memory of the Atari game) and an action (e.g., a button on the game sticker), and outputs a value of the action.

To train an RL agent, we need an initialization parameter for the Q-function and an experience dataset of a given task. The dataset consists of tuples of 1) observation embeddings, 2) action, 3) reward, 4) termination status of the current state, and 5) observation embeddings of the next state e.g. RoadRunner imported from OpenAI Gym library.

The experiments include comparison of the following two baselines:
(i) Isolated agent: In this baseline, a single agent performs the conventional lifelong RL to sequentially learn the Atari environments, i.e. without sharing of experience replay buffers. 
(ii) \distMT Agent: In addition to performing lifelong RL on each agent, these agents also share experiences across each other, and hence learning the optimal policies faster.

In order to evaluate our experiment, we use two metrics, defined as Agent Time (AT) and
Agent Reward (AR). Assume there are $N$ agents that collaboratively learn $M$ distinct tasks. Each agent learns a unique sequence of these M tasks. More formally, AT and AR are defined as follows: \begin{enumerate*}[label=\textit{\arabic*)}]
  \item Agent Time (AT) refers to the average time of each of the $N$ agents to achieve 90\% of average normalized scores of all tasks over a permutation of sequences of $M$ distinct tasks. This metric is proportional to the notion of sample complexity for which we provide an upper bound in Theorem \ref{thm:main}.
  \item Agent Reward (AR) refers to the average reward of $N$ agents after being trained for a fixed amount of time on each of the $M$ distinct sequential tasks.
\end{enumerate*}

We have conducted the following experiments to evaluate the performance improvement of the \distMT agent over the Isolated Lifelong Learning agent. The goal of this experiment is to demonstrate the increase of Agent Reward when the number of agents is increasing. We use different numbers of agents that interpolate between 1 and 20. The case of one agent corresponds to the Isolated agent case.

\begin{figure*}[ht]
\centering
\begin{subfigure}{2.6in}
\begin{tikzpicture}
\node at (0,0) {\includegraphics[scale=0.38]{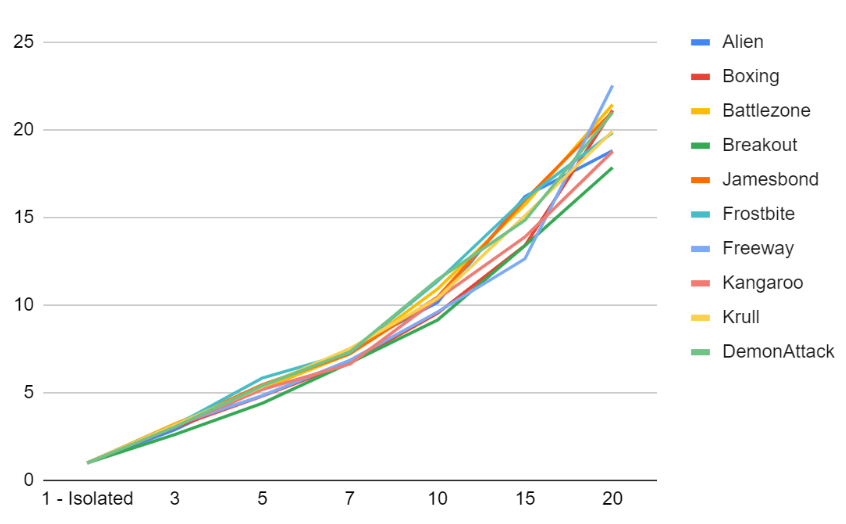}};
\node at (-3.3,0) [rotate=90,scale=0.9]{Normalized AR};
\node at (0,-2.2) [scale=0.9]{Number of agents, $N$};
\end{tikzpicture}
\caption{Normalized Agent Reward}
\label{fig:AR}
\end{subfigure}
\centering
\begin{subfigure}{2.6in}
\begin{tikzpicture}
\node at (0,0) {\includegraphics[scale=0.38]{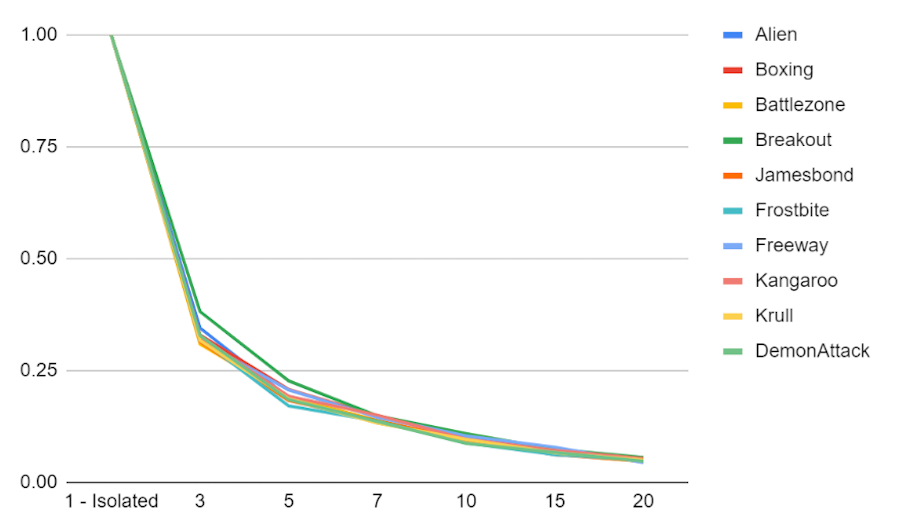}};
\node at (-3.3,0) [rotate=90,scale=0.9]{Normalized AT};
\node at (0,-2.2) [scale=0.9]{Number of agents, $N$};
\end{tikzpicture}
\caption{Normalized Agent Time}
\label{fig:AT}
\end{subfigure}
\caption{Normalized AR and AT.}
\label{fig:ARAt}
\end{figure*}


Figure \ref{fig:AR} shows the difference in normalized average reward result with different numbers of agents. In normalizing the reward, we take the reward in the random setting as 0 and the isolate agent as 1. We can see that across all games, the 
\distMT system achieves a linear increase across different numbers of agents in normalized Agent Reward. In general, the \distMT agent is able to achieve a $0.88N$ scaling on the reward. In this procedure: \begin{enumerate*}[label=\textit{\arabic*)}]
  \item We train the single agent (isolated Continual/Lifelong Learning) agent on a random permutation of 10 tasks.
  \item We repeat this experiment 5 times with 5 different random seeds. At the end of 10 tasks, we save the final/last model.
  \item We then use the 5 last saved models and evaluate the performance for each task for instance Alien.
  \item We finally report the results after evaluation on the best model (out of the 5 runs).
  \item We emulate the above experiment setting for \distMT experiments ( 3 agents, 5 agents, 7 agents, 10 agents, and 20 agents), and report the normalized AR.
\end{enumerate*}

We also demonstrate the decrease of Agent Time when the number of agents is rising. The agent time is measured by the percentage of number of frames for the agent on average to reach the reward in the Isolated agent that performs single agent lifelong learning. We use different numbers of agents that interpolate between 1 and 20 with the following specifications: \begin{enumerate*}[label=\textit{\arabic*)}]
  \item Hypothesis: increasing the number of agents will decrease the agent time to reach the desired reward on each game.
  \item Independent variable: Number of agents (1, 3, 5, 7, 10, 15, 20), Atari environments
  \item Dependent variable: Agent Time (AR) - Percentage of number of frames to reach the reward of the isolated agent
  \item Procedure: Randomly sample 10 unique tasks from the 20 total tasks.Run training on the \distMT system with different iterations per epoch (1, 3, 5, 7, 10, 15, 20) and train all 10 tasks.\end{enumerate*}

  \begin{figure*}[ht]
\centering
\begin{tikzpicture}
\node at (0,0) {\includegraphics[scale=0.45]{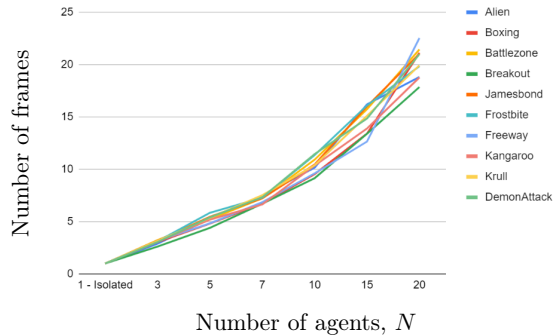}};
\node at (-3.8,0) [rotate=90,scale=0.9]{Number of frames};
\node at (0,-2.4) [scale=0.9]{Number of agents, $N$};
\end{tikzpicture}
\caption{Number of frames required by \distMT agents to reach x\% 	\{20, 40, 60, 80, 99\} of the Single Agent reward}
\label{fig:threshold}
\end{figure*}

Figure \ref{fig:AT} shows the difference in AT with different numbers of agents. We can see that across all games, our proposed \distMT system achieves a continuous decrease across different numbers of agents in Agent Time.


Figure \ref{fig:threshold} indicates the Agent Time (measured in terms of number of frames at the time of training). The frame rate is ~60 fps (frames/second). Here we show that scaling the number of agents leads to speedup of performance. On the y-axis, we have the number of frames. On the x-axis, we have the isolated agent (Lifelong Learning agent without experience sharing), 3 agents \distMT, 5 agents \distMT, 7 agents \distMT, 10 agents \distMT, and 20 agents \distMT. The following procedure has been adopted to achieve the above plot: \begin{enumerate*}[label=\textit{\arabic*)}]
\item We log the agent reward from the single agent (Lifelong Learning agent without experience sharing) experiment along with the number of frames. We then perform the \distMT experiment for 3 agents, 5 agents, 7 agents, 10 agents, 15 agents, and 20 agents.
 \item In each \distMT experiment, we measure the number of frames taken to achieve 20\% , 40\%, 60\%, 80\% and 99\% of the isolated agent reward, and we plot .
 \item This plot will help us guide new design experiments to balance the trade-off between utility and computational time.
\end{enumerate*}

\section{Related Work}

We consider the sample-complexity setup of distributed multi-task RL under the contextual MDP framework, where $N$ agents receive tasks specified by unknown contexts from a pool of $M$ tasks and they share information with each other through a server. 
Below, we contrast our work with related work in the literature.

\paragraph{Multi-task RL and Lifelong RL.} 
Multi-task RL studied in \citet{yang2020multi,hessel2019multi,brunskill2013sample,fifty2021efficiently,zhang2021provably,sodhani2021multi} assumes that tasks are chosen from a known finite set, and in \citet{yang2020multi,wilson2007multi,brunskill2013sample,sun2021temple}, tasks are sampled from a fixed distribution, and in both, task identities are assumed to be known. By contrast, our setting provides theoretical guarantees for task sequences whose identities are unknown. Another closely related line of work is lifelong RL, which studies how to learn to solve a streaming sequence of tasks.
Historically many works on lifelong RL \citep{ammar2014online,brunskill2014pac,abel2018state,abel2018policy,lecarpentier2021lipschitz} assume that the tasks are i.i.d. (similar to multi-task RL). 
There are works for adversarial sequences, but they all assume the tasks are known when assigned to the agent and most of them are purely empirical \citep{xie2021lifelong}. The work by \citet{isele2016using} uses contexts to enable zero-shot learning like here, but it (as well as most works above) does not provide formal regret or sample-complexity guarantees.

\paragraph{Distributed/Multi-agent RL.} Distributed/Multi-agent RL is a domain with a relatively long history, beginning from classical algorithms in the
fully-cooperative setting \citep{boutilier1996planning}, where all agents share identical reward functions to setting with multi-agent MDPs \citep{lauer2004reinforcement} and it has recently re-emerged due to advances in single-agent RL techniques. In the most closely related line of work, \citet{zhang2018networked, zhang2018fully, dubey2021provably} study a more general heterogeneous reward setting, where each agent has unique rewards. However, from a multi-task learning viewpoint, in all these works, each agent is assigned to only one fixed task, i.e., $M=N$, and its goal is to learn optimal policy for only that task. In contrast, in our work, $M$ and $N$ are not necessarily the same, and the goal is for all $N$ agents to achieve $\epsilon$-optimal policies for all $M$ tasks.

\paragraph{Contextual MDP and multi-objective RL.}
Our setup is closely related to the exploration problem studied in the contextual MDP literature. Most contextual MDP works allow adversarial contexts, but a majority of them focuses on the tabular setup \citep{abbasi2014online,hallak2015contextual,modi2018markov,modi2020no,levy2022learning,wu2021accommodating}, whereas our setup allows continuous state and action spaces. \citet{kakade2020information} and \citet{du2019continuous} allow continuous state and action spaces, but the former assumes a planning oracle with unclear computational complexity and the latter focuses on only LQG problems.

\section{Conclusion}
Motivated by DARPA's ShELL program, we conducted a comprehensive theoretical and empirical study on distributed multi-task RL. In this framework, $N$ agents collaborate to solve a set of $M$ tasks, without prior knowledge of the task identities. To address this challenge, we formulated the problem using linearly parameterized contextual MDPs, where each task is represented by a context that specifies its transition dynamics and rewards. To tackle this problem, we introduced a novel algorithm called \distMT that enables agents to first identify the tasks and then leverage a central server to facilitate information sharing, ultimately obtaining $\epsilon$-optimal policies for all $M$ tasks. Our theoretical analysis establishes that the total number of episodes required for a single agent to execute \distMT is bounded by $\Otilde({d^3H^6(\epsilon^{-2}+\csep^{-2})}\cdot M/N)$, where $\csep>0$ is a constant characterizing task separability, $H$ is horizon of each episode and $d$ is the feature dimension of the dynamics and rewards. Notably, \distMT significantly improves the sample complexity compared to non-distributed settings by a factor of $1/N$. In the non-distributed approach, each agent independently learns $\epsilon$-optimal policies for all $M$ tasks, necessitating $\Otilde(d^3H^6M\epsilon^{-2})$ episodes per agent. Our extensive numerical experiments using OpenAI Gym Atari environments provided empirical evidence supporting the efficacy and performance of our proposed methodology. That said, our work's limitations motivate further investigations in the following directions: 
\begin{enumerate*}[label=\textit{\arabic*)}]
    \item extension to more general class of MDPs, potentially using general function approximation/representation tools,
    \item studying settings with adversarial streaming sequence of tasks.
\end{enumerate*}


\newpage
\bibliographystyle{apalike}
\bibliography{main}

\newpage
\appendix

\section{Proofs of Section \ref{sec:alg}}\label{appx:proofofthm}
In order to prove Theorem \ref{thm:main}, and Lemmas \ref{lemm:confidence1} and \ref{lemm:confidence2}, we first state the following lemma.
\begin{lemma}\label{lemm:keylemma}
    Fix $\delta\in(0,1)$ and $\epsilon\in(0,1)$ and $\beta = c_{\beta} Hd\sqrt{\log( dHM\delta^{-1}\epsilon^{-1})}$ for some $c_{\beta}>0$, $K = c_{K}d^3H^6\log(dHM\delta^{-1}\epsilon^{-1})/\epsilon^2$ for some $c_{K}>0$. Let $\Dc = \{(s_h^k,a_h^k,r_h^k)\}_{(h,k)\in[H]\times[K]}$ be the dataset generated in Algorithm \ref{alg:explore} in an environment defined with MDP $\Mc=(\Sc,\Ac, H,\Pb, r)$ with unknown parameters $\mathbb P$ and $r$ and be the input to the Algorithm \ref{alg:planning}. Let $\Lambdab_h^k$, $u_h^k$, $\thetab_h^k$, $Q_h^k$, $V_h^k$ be defined as in Algorithm \ref{alg:explore} and $\Lambdab_h$, $u_h$, $\thetab_h$, $Q_h$, $V_h$ be defined as in Algorithm \ref{alg:planning} and $\pi_h(.) = \argmax_{a\in\Ac}Q_h(.,a)$. Then with probability at least $1-\delta$, it holds that
    \begin{align}\label{eq:first}
       V_1^\ast(s_0)\leq V_1(s_0)
    \end{align}
and
    \begin{align}\label{eq:second}
     V_1(s_0)-V_1^\pi(s_0)\leq \epsilon.
    \end{align}
\end{lemma}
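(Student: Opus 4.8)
The plan is to prove Lemma~\ref{lemm:keylemma} as a reward-free exploration guarantee for linear MDPs, in the spirit of \citet{wang2020reward} and the optimistic LSVI analysis of \citet{jin2020provably}. The argument rests on four ingredients: (i) a high-probability \emph{good event} on which every ridge-regression estimate concentrates; (ii) optimism of the computed value functions, proved by backward induction, which yields \eqref{eq:first}; (iii) a Bellman-error telescoping identity bounding the suboptimality of the planning policy by the expected sum of its exploration bonuses; and (iv) an elliptical-potential argument showing the $K$ exploration episodes drive those bonuses below $\epsilon$, which yields \eqref{eq:second}.

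\textbf{Step 1 (good event).} I would define the event $\Ec$ on which, simultaneously for the planning quantities (all $h\in[H]$) and the exploration quantities (all $(h,k)\in[H]\times[K]$), one has $\bigl|\langle\thetab_h,\phib(s,a)\rangle-r_h(s,a)-\Pb_h[V_{h+1}](s,a)\bigr|\le u_h(s,a)$ for all $(s,a)$, and the transition-only analogue $\bigl|\langle\thetab_h^k,\phib(s,a)\rangle-\Pb_h[V_{h+1}^k](s,a)\bigr|\le u_h^k(s,a)$, together with Azuma--Hoeffding control of the trajectory martingales. Since $V_{h+1}$ and $V_{h+1}^k$ are themselves built from the data, this needs a uniform concentration bound over the parametric class $\{\,s\mapsto\max_a\min\{\langle\wb,\phib(s,a)\rangle+\min\{\beta\|\phib(s,a)\|_{A^{-1}},H\},H\}\,\}$ indexed by a norm-bounded vector $\wb$ and a matrix $A\succeq\Iden_d$ with eigenvalues in $[1,1+K]$: one builds an $\epsilon_0$-net of this class (log-covering number $\Otilde(d^2)$), applies a self-normalized concentration inequality to each net element, bounds the discretization error, and takes a union bound; this forces the stated choice $\beta=\Theta\bigl(Hd\sqrt{\log(dHM\delta^{-1}\epsilon^{-1})}\bigr)$ and gives $\Pb(\Ec)\ge 1-\delta$. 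I expect this covering step to be the main obstacle, although it is by now a standard calculation.

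\textbf{Step 2 (optimism; proof of \eqref{eq:first}).} On $\Ec$, a backward induction on $h$ from $H+1$ down to $1$ shows $Q_h(s,a)\ge Q_h^\ast(s,a)$ and hence $V_h(s)\ge V_h^\ast(s)$: the base case $V_{H+1}=V_{H+1}^\ast=0$ is immediate, and in the inductive step $\langle\thetab_h,\phib\rangle+u_h\ge r_h+\Pb_h[V_{h+1}]\ge r_h+\Pb_h[V_{h+1}^\ast]=Q_h^\ast$ by monotonicity of the Bellman operator, while truncation at $H$ is harmless since $Q_h^\ast\le H$. In particular $V_1^\ast(s_0)\le V_1(s_0)$. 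The same induction applied inside Algorithm~\ref{alg:explore} shows each $V_h^k$ is optimistic for the exploration MDP with transition kernel $\Pb$ and per-step reward $\tilde u_h^k$, so $V_1^k(s_0)\ge\E_\pi\bigl[\sum_{h=1}^H\tilde u_h^k(s_h,a_h)\bigr]$ for every policy $\pi$.

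\textbf{Step 3 (suboptimality; proof of \eqref{eq:second}).} Still on $\Ec$, the Bellman error of the planning $Q$ obeys $0\le Q_h(s,a)-\bigl(r_h(s,a)+\Pb_h[V_{h+1}](s,a)\bigr)\le 2u_h(s,a)$, so telescoping along a trajectory of $\pi$ in the true MDP yields $V_1(s_0)-V_1^\pi(s_0)\le 2\,\E_\pi\bigl[\sum_{h=1}^H u_h(s_h,a_h)\bigr]$. Since the planning matrix $\Lambdab_h=\Iden_d+\sum_{\tau=1}^K\phib_h^\tau(\phib_h^\tau)^\top$ dominates every exploration matrix $\Lambdab_h^k$ in the PSD order, $u_h\le u_h^k$ pointwise for all $k\in[K]$; picking $k^\star\in\argmin_{k\in[K]}V_1^k(s_0)$ and invoking Step 2,
\[
\E_\pi\Bigl[\sum_{h=1}^H u_h(s_h,a_h)\Bigr]\le\E_\pi\Bigl[\sum_{h=1}^H u_h^{k^\star}(s_h,a_h)\Bigr]=H\,\E_\pi\Bigl[\sum_{h=1}^H\tilde u_h^{k^\star}(s_h,a_h)\Bigr]\le H\,V_1^{k^\star}(s_0)\le\frac{H}{K}\sum_{k=1}^K V_1^k(s_0).
\]
Finally, the Bellman-error decomposition applied inside the exploration phase gives $V_1^k(s_0)\le 3\sum_{h=1}^H\E_{\pi^k}[u_h^k(s_h,a_h)]$, so summing over $k$, adding an Azuma term for the trajectory martingale, and invoking the elliptical potential lemma $\sum_{k=1}^K\|\phib_h^k\|_{(\Lambdab_h^k)^{-1}}^2\le 2d\log(1+K/d)$ gives $\sum_{k=1}^K V_1^k(s_0)=\Otilde(\beta H\sqrt{dK})$. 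Combining with the last display, $V_1(s_0)-V_1^\pi(s_0)=\Otilde(\beta H^2\sqrt{d/K})=\Otilde(H^3 d^{3/2}/\sqrt{K})$, which is at most $\epsilon$ once $K=c_K d^3H^6\log(dHM\delta^{-1}\epsilon^{-1})/\epsilon^2$ with $c_K$ large enough. This establishes \eqref{eq:second} and completes the proof.
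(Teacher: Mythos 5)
Your proposal is correct and follows essentially the same route as the paper: optimism of the planning values via ridge-regression concentration over a covering of the value-function class plus backward induction for \eqref{eq:first}, then a telescoping Bellman-error bound reducing the suboptimality of $\pi$ to the expected sum of bonuses under the final covariance matrices for \eqref{eq:second}. The only difference is that where you re-derive the exploration guarantee $\frac{1}{K}\sum_{k}V_1^k(s_0)=\Otilde\bigl(\beta H\sqrt{d/K}\bigr)$ from optimism of the exploration values and the elliptical potential lemma, the paper imports that step wholesale as Lemma 3.2 of Wang et al.\ (2020); your version is more self-contained but establishes the same $\Otilde\bigl(\sqrt{d^3H^6\log(dKH/\delta)/K}\bigr)$ bound and hence the same choice of $K$.
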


\begin{proof}
.

{\bf First part:}

We define the backups as
\begin{align} \tilde\thetab_h\coloneqq\etab_h+\int_{\Sc}V_{h+1}(s^\prime)d\mub_h(s^\prime)\label{eq:tildethetahk},
\end{align}
Thanks to the linear MDP structure in Assumption \ref{assum:linearMDP}, it holds that 

\begin{align}\label{eq:PVh+1linearform}
    r_h(s,a)+\mathbb{P}_h\left[V_{h+1}(.)\right](s,a) = \left\langle\tilde\thetab_{h},\phib(s,a)\right\rangle.
\end{align}

\begin{align}
    \tilde\thetab_h - \thetab_h&= \tilde\thetab_h-\Lambdab_h^{-1}\sum_{\tau=1}^{K}\phib_h^\tau [r_h^\tau+V_{h+1}(s_{h+1}^\tau)]\nn\\
    &= \Lambdab_h^{-1}\left(\Lambdab_h\tilde\thetab_h-\sum_{\tau=1}^{K}\phib_h^\tau [r_h^\tau+V_{h+1}(s_{h+1}^\tau)]\right)\nn\\
&=\underbrace{\Lambdab_h^{-1}\tilde\thetab_h}_{\qb_1}\underbrace{-\Lambdab_h^{-1}\left(\sum_{\tau=1}^{K}\phib_h^\tau\left( V_{h+1}(s_{h+1}^\tau)-\mathbb{P}_h[V_{h+1}(.)](s_h^\tau,a_h^\tau)\right)\right)}_{\qb_2}\nn.
\end{align}

Thus, in order to upper bound $\norm{\tilde\thetab_h-\thetab_h}_{\Lambdab_h}$, we bound $\norm{\qb_1}_{\Lambdab_h}$ and $\norm{\qb_2}_{\Lambdab_h}$ separately.

From Lemma Assumption \ref{assum:linearMDP}, we have
\begin{align}\label{eq:q1phi1}
\norm{\qb_1}_{\Lambdab_h}=\norm{\tilde\thetab_h}_{\Lambdab_h^{-1}}\leq \norm{\tilde\thetab_h}_2\leq H\sqrt{d}.
\end{align}

Thanks to Lemma \ref{lemm:lemmaD.4inJinetal}, for all $h\in[H]$, with probability at least $1-\delta$, it holds that
\begin{align} \label{eq:q2phi1}
    \norm{\qb_2}_{\Lambdab_h}\leq \norm{\sum_{\tau=1}^{K}\phib_h^\tau\left( V_{h+1}(s_{h+1}^\tau)-\mathbb{P}_h[V_{h+1}(.)](s_h^\tau,a_h^\tau)\right)}_{\Lambdab_h^{-1}}\leq c_0Hd\sqrt{\log((c_\beta+1) dHK/\delta)},
\end{align}
where $c_0$ and $c_\beta$ are two independent absolute constants. Combining \eqref{eq:q1phi1} and \eqref{eq:q2phi1}, for all $h\in[H]$, with probability at least $1-\delta$, it holds that

\begin{align}
    \norm{\thetab_h-\tilde\thetab_h}_{\Lambdab_h}\leq cHd\sqrt{\log( dHK/\delta)}=\beta\nn
\end{align}
for some absolute constant $c>0$. Therefore, for all $h\in[H]$, with probability at least $1-\delta$, it holds that
\begin{align}\label{eq:confidence1}
    \abs{r_h(s,a)+\mathbb{P}_h\left[V_{h+1}(.)\right](s,a) - \langle \thetab_h,\phib(s,a)\rangle} &= \abs{\langle \tilde\thetab_h-\thetab_h,\phib(s,a)\rangle}\nn\\
    &\leq \beta\norm{\phib(s,a)}_{\Lambdab_h^{-1}}.
\end{align}

Note that conditioned on the event defined in \eqref{eq:confidence1} for all $(s,a)\in\Sc\times\Ac$, it holds that
\begin{align}\label{eq:confidence2}
&\abs{\left\langle\thetab_h,\phib(s,a)\right\rangle-Q_h^\pi(s,a)-\mathbb{P}_h\left[V_{h+1}(.)-V_{h+1}^\pi(.)\right](s,a)}\nn\\
&=\abs{\left\langle\thetab_h,\phib(s,a)\right\rangle-r_h(s,a)-\mathbb{P}_h\left[V_{h+1}(.)\right](s,a)}\nn\\
&\leq\beta\norm{\phib(s,a)}_{\left(\Lambdab_h^k\right)^{-1}},
\end{align}
for any policy $\pi$. Now, we are ready to prove the first part of the lemma by induction. The statement holds for $H$ because $Q_{H+1}(.,.)=Q_{H+1}^\ast(.,.)=0$ and thus conditioned on the event defined in \eqref{eq:confidence2}, for all $(s,a)\in\Sc\times\Ac$, we have

\begin{align}
   \abs{\left\langle\thetab_H,\phib(s,a)\right\rangle-Q_H^{\ast}(s,a)}\leq \beta\norm{\phib(s,a)}_{\Lambdab_H^{-1}}.\nn
\end{align}
Therefore, for all $(s,a)\in\Sc\times\Ac$, we have
\begin{align}
    Q^\ast_H(s,a)\leq \left\langle\thetab_H,\phib(s,a)\right\rangle+\beta\norm{\phib(s,a)}_{\Lambdab_H^{-1}}\nn.
\end{align}
Since $Q_H^\ast(s,a)\leq H$, we have
\begin{align}
    Q^\ast_H(s,a)\leq \min\left\{\left\langle\thetab_H,\phib(s,a)\right\rangle+\min\left\{\beta\norm{\phib(s,a)}_{\Lambdab_H^{-1}},H\right\},H\right\} = Q_H(s,a)\nn,
\end{align}
which implies that
\begin{align}
    V^\ast_H(s) = \max_{a\in\Ac}Q_H^\ast(s,a)\leq \max_{a\in\Ac}Q_H^(s,a) = V_H(s).\nn
\end{align}
Now, suppose the statement holds at time-step $h+1$ and consider time-step $h$. Conditioned on the event defined in \eqref{eq:confidence2}, for all $(s,a)\in\Sc\times\Ac$, we have
\begin{align}
    0&\leq \left\langle\thetab_h,\phib(s,a)\right\rangle-Q_h^{\ast}(s,a)-\mathbb{P}_h\left[V_{h+1}(.)-V_{h+1}^{\ast}(.)\right](s,a)+\beta\norm{\phib(s,a)}_{\Lambdab_h^{-1}}\nn\\
    &\left\langle\thetab_h,\phib(s,a)\right\rangle-Q_h^{\ast}(s,a)+\beta\norm{\phib(s,a)}_{\Lambdab_h^{-1}}.\tag{Induction assumption}
\end{align}
Since $Q_h^\ast(s,a)\leq H$, we have
\begin{align}
    Q^\ast_h(s,a)\leq \min\left\{\left\langle\thetab_h,\phib(s,a)\right\rangle+\min\left\{\beta\norm{\phib(s,a)}_{\Lambdab_h^{-1}},H\right\},H\right\} = Q_h(s,a)\nn,
\end{align}
which means that

\begin{align}
    V^\ast_h(s) = \max_{a\in\Ac}Q_h^\ast(s,a)\leq \max_{a\in\Ac}Q_h^(s,a) = V_h(s).\nn
\end{align}
This completes the proof.

{\bf Second part:}

From Lemma 3.2 in \cite{wang2020reward}, with probability at least $1-\delta$, it holds that
\begin{align}\label{eq:wang}
    V_1^\ast(s_0,u/H)\leq c^\prime\sqrt{d^3H^4\log(dKH/\delta)/K},
\end{align}
for some absolute constant $c^\prime>0$.

Note that
\begin{align}
    V_1(s_0) - V_1^\pi(s_0) &=Q_1(s_0,\pi_1(s_0)) - Q_1^\pi(s_0,\pi_1(s_0))\nn\\
    &\leq \mathbb E_{s_2\sim\mathbb P_1(.|s_0,\pi_1(s_0))}\left[V_{2}(s_2)+2u_1(s_0,\pi_1(s_0))-V_2^\pi(s_2)\right]\nn\\
    &\leq E_{s_3\sim\mathbb P_2(.|s_1,\pi_2(s_1))}E_{s_2\sim\mathbb P_1(.|s_0,\pi_1(s_0))}\left[V_{3}(s_3)-V_3^\pi(s_3)+2u_1(s_0,\pi_1(s_0))++2u_2(s_2,\pi_2(s_2))\right]\nn\\
    \vdots\nn\\
    &\leq V_1^\pi(s_0,u)\nn\\
    &\leq V_1^\ast(s_0,u)\nn\\
    &=H V_1^\ast(s_0,u/H)\nn\\
    &\leq c^\prime\sqrt{d^3H^6\log(dKH/\delta)/K} \tag{Eqn \eqref{eq:wang}}.
\end{align}
Therefore, by taking  $K = c_{K}d^3H^6\log(dHM\delta^{-1}\epsilon^{-1})/\epsilon^2$ for some $c_{K}>0$, we have
\begin{align}
     V_1(s_0)-V_1^\pi(s_0)\leq c^\prime\sqrt{d^3H^6\log(dKH/\delta)/K} \leq \epsilon,
\end{align}
which completes the proof.
\end{proof}


\subsection{Proof of Lemma \ref{lemm:confidence1}}
Both inequalities in Lemma \ref{lemm:confidence1} can be proven using Lemma \ref{lemm:keylemma}. Note that for every $(i,t)\in[N]\times[T]$, $V_{1,i}^{1,t}$ is computed based on output quantities of Algorithm \ref{alg:planning} whose input is a dataset generated by interacting with task $m_{i,t}$ environment (see Lines \ref{line:K1explore} and \ref{line:K1plan} in Algorithm \ref{alg:main}). Thus, from \eqref{eq:first}, with probability at least $1-\delta$, it holds that
\begin{align}
    0\leq V_{1,i}^{1,t}(s_0)-V_{m_{i,t},1}^\ast(s_0).
\end{align}

Now, let $\tilde\pi_i^t = \{\tilde\pi_{h,i}^t\}_{h=1}^H$, where $\tilde\pi_{h,i}^t(.) = \argmax_{a\in\Ac}Q_h(.,a)$, $Q_h(.,.) =  \min\left\{\langle\thetab_{h,i}^{1,t},\phib(.,.)\rangle+u_h(.,.),H\right\}$,  $u_h(.,.) = \min\left\{\beta_1\norm{\phib(.,.)}_{\left(\Lambdab_{h,i}^{1,t}\right)^{-1}},H\right\}$. Therefore, from \eqref{eq:second}, with probability at least $1-\delta$, it holds that
\begin{align}
   V_{1,i}^{1,t}(s_0)-V_{m_{i,t},1}^\ast(s_0)\leq  V_{1,i}^{1,t}(s_0)-V_{m_{i,t},1}^{\tilde\pi_i^t}(s_0)\leq\csep/8,
\end{align}
which completes the proof.

\subsection{Proof of Lemma \ref{lemm:separ}}
First, we prove that if $\abs{V_{1,i}^{1,t}(s_0)-V_{1,j}^{1,t^\prime}(s_0)}>\csep/2$, then $m_{i,t}$ and $m_{j,t^\prime}$ are two different tasks. We have
\begin{align}
   \abs{V_{1,m_{i,t}}^\ast(s_0)-V_{1,m_{j,t^\prime}}^\ast(s_0)}&=\abs{V_{1,m_{i,t}}^\ast(s_0)-V_{1,i}^{1,t}(s_0)+V_{1,i}^{1,t}(s_0)-V_{1,j}^{1,t^\prime}(s_0)+V_{1,j}^{1,t^\prime}(s_0)-V_{1,m_{j,t^\prime}}^\ast(s_0)}\nn\\
   &\geq \abs{V_{1,i}^{1,t}(s_0)-V_{1,j}^{1,t^\prime}(s_0)}-\abs{V_{1,m_{i,t}}^\ast(s_0)-V_{1,i}^{1,t}(s_0)+}-\abs{V_{1,j}^{1,t^\prime}(s_0)-V_{1,m_{j,t^\prime}}^\ast(s_0)}\tag{Triangle Inequality}\\
   &>\csep/2-\csep/8-\csep/8 = \csep/4\tag{Lemma \ref{lemm:confidence1}}
\end{align}
which means that $V_{1,m_{i,t}}^\ast(s_0)\neq V_{1,m_{j,t^\prime}}^\ast(s_0)$, and therefore, $m_{i,t}$ and $m_{j,t^\prime}$ cannot be the same tasks.

Now, we prove that $\abs{V_{1,i}^{1,t}(s_0)-V_{1,j}^{1,t^\prime}(s_0)}\leq\csep/2$, then $m_{i,t}$ and $m_{j,t^\prime}$ are the same tasks. We have
\begin{align}
   \abs{V_{1,m_{i,t}}^\ast(s_0)-V_{1,m_{j,t^\prime}}^\ast(s_0)}&=\abs{V_{1,m_{i,t}}^\ast(s_0)-V_{1,i}^{1,t}(s_0)+V_{1,i}^{1,t}(s_0)-V_{1,j}^{1,t^\prime}(s_0)+V_{1,j}^{1,t^\prime}(s_0)-V_{1,m_{j,t^\prime}}^\ast(s_0)}\nn\\
   &\leq \abs{V_{1,i}^{1,t}(s_0)-V_{1,j}^{1,t^\prime}(s_0)}+\abs{V_{1,m_{i,t}}^\ast(s_0)-V_{1,i}^{1,t}(s_0)+}+\abs{V_{1,j}^{1,t^\prime}(s_0)-V_{1,m_{j,t^\prime}}^\ast(s_0)}\tag{Triangle Inequality}\\
   &\leq\csep/2+\csep/8+\csep/8 = 3\csep/4\tag{Lemma \ref{lemm:confidence1}},
\end{align}
which means that $m_{i,t}$ and $m_{j,t^\prime}$ cannot be two different tasks as $ \abs{V_{1,m_{i,t}}^\ast(s_0)-V_{1,m_{j,t^\prime}}^\ast(s_0)}$ is not greater than $\csep$ (See Assumption \ref{assum:separ}).


\subsection{Proof of Lemma \ref{lemm:confidence2}}
Thanks to the definition of optimal policy and $V_{m_{i,t},1}^\ast(s_0)$, it is trivial to show the first inequality holds and $V_{m_{i,t},1}^{\pi_i^t}(s_0)\leq V_{m_{i,t},1}^\ast(s_0)$. To prove the second inequality, we use Lemma \ref{lemm:keylemma}. Note that for every $(i,t)\in[N]\times[T]$, $V_{1,i}^{1,t}$ is computed based on output quantities of Algorithm \ref{alg:planning} whose input is a dataset generated by interacting with task $m_{i,t}$ environment (see Lines \ref{line:K2explore} and \ref{line:K2plan} in Algorithm \ref{alg:main}). Therefore, for all $(i,t)\in[N]\times[T]$, with probability at least $1-\delta$, it holds that
\begin{align}
   V_{m_{i,t},1}^\ast(s_0) - V_{m_{i,t},1}^{\pi_i^t}(s_0)&\leq V_{1,i}^{2,t}(s_0) - V_{m_{i,t},1}^{\pi_i^t}(s_0)\tag{Eqn. \eqref{eq:first}}\\
   &\leq \epsilon \tag{Eqn. \eqref{eq:second}},
\end{align}
which completes the proof.

\subsection{Proof of Lemma \ref{lemm:T}}
    Let $\mathbb{I}_{i,t}^m$ be an indicator random variable, which is if task $m$ is assigned to agent $i$ at round $t$ and 0 otherwise. Let $k_m = \sum_{i\in[N]}\sum_{t\in[T]}\mathbb{I}_{i,t}^m$ be the random variable specifying the number of times task $m$ were assigned to an agent over the course of $T$ rounds. Therefore, we have
    \begin{align}
        \mu_m = \mathbb{E}[k_m] = \frac{NT}{M}.
    \end{align}

Using multiplicative Chernoff bound, we have
\begin{align}
    \mathbb{P}(k_m<1)\leq e^{\frac{-\left(1-\frac{1}{\mu_m}\right)^2\mu_m}{2}}.
\end{align}
Our choice of $T$ guarantees that
\begin{align}
    \mathbb{P}(\exists m\in[M], k_m<1)\leq \delta,
\end{align}
which completes the proof.


\section{Auxiliary lemmas}\label{sec:auxiliary}
\paragraph{Notations.}

$\Nc_\epsilon(\Vc)$ denotes the $\epsilon$-covering number of the class $\Vc$ of functions mapping $
\Sc$ to $\mathbb{R}$ with respect to the distance ${\rm dist}(V,V^\prime)=\sup_s\abs{V(s)-V^\prime(s)}$.

\begin{lemma}[Lemma D.4 in \citet{jin2020provably}]\label{lemm:lemmaD.4inJinetal}
Let $\{s_\tau\}_{\tau=1}^\infty$ be a stochastic process on state space $\Sc$ with corresponding filtration $\{\Fc_\tau\}_{\tau=0}^\infty$. Let $\{\phib_\tau\}_{\tau=0}^\infty$ be an $\mathbb{R}^d$-valued stochastic process where $\phib_\tau\in\Fc_{\tau-1}$, and $\norm{\phib_\tau}\leq 1$. Let $\Lambdab_k= \Iden_{d}+\sum_{\tau=1}^{k-1}\phib_\tau\phib_\tau^\top$. Then with probability at least $1-\delta$, for all $k\geq 0$ and $V\in\Vc$ such that $\sup_{s\in\Sc}\abs{V(s)}\leq H$, we have
\begin{align}
    \norm{\sum_{\tau=1}^k\phib_\tau.\left(V(s_\tau)-\mathbb{E}\left[V(s_\tau)\vert\Fc_{\tau-1}\right]\right)}_{\Lambdab_k^{-1}}^2\leq 4H^2\left(\frac{{d}}{2}\log\left(\frac{k+\la}{\la}\right)+\log\left(\frac{\Nc_\epsilon(\Vc)}{\delta}\right)\right)+\frac{8k^2\epsilon^2}{\la}.\nn
\end{align}
\end{lemma}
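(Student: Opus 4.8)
The plan is to prove this as a \emph{uniform} self-normalized concentration inequality over the function class $\Vc$, following the standard two-stage template: first establish the bound for an arbitrary \textbf{fixed} $V$ via a vector-valued martingale tail bound, and then upgrade to a statement uniform over all $V\in\Vc$ by a covering (net) argument, paying a discretization error of order $k^2\epsilon^2/\la$.

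\textbf{Step 1 (fixed $V$).} Fix $V$ with $\sup_s\abs{V(s)}\le H$ and set $\eta_\tau\coloneqq V(s_\tau)-\E[V(s_\tau)\mid\Fc_{\tau-1}]$. Since $\phib_\tau\in\Fc_{\tau-1}$, the sequence $\{\phib_\tau\eta_\tau\}$ is a vector-valued martingale-difference sequence adapted to $\{\Fc_\tau\}$, and each $\eta_\tau$ is conditionally mean-zero and supported in an interval of length $2H$, hence conditionally $H$-sub-Gaussian by Hoeffding's lemma. I would then invoke the self-normalized tail bound for vector-valued martingales (Abbasi-Yadkori, P\'al, and Szepesv\'ari), which gives, with probability at least $1-\delta$, simultaneously for all $k\ge 0$,
\begin{align}
\norm{\sum_{\tau=1}^k\phib_\tau\eta_\tau}_{\Lambdab_k^{-1}}^2\le 2H^2\log\!\left(\frac{\det(\Lambdab_k)^{1/2}\la^{-d/2}}{\delta}\right).\nn
\end{align}
Converting the log-determinant via the trace--determinant (AM--GM) inequality, $\det(\Lambdab_k)\le(\la+k/d)^d$, and using $\la+k/d\le k+\la$, bounds the right-hand side by $2H^2\big(\tfrac{d}{2}\log\tfrac{k+\la}{\la}+\log\tfrac{1}{\delta}\big)$.

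\textbf{Step 2 (net $+$ union bound) and Step 3 (discretization).} Let $\Vc_\epsilon$ be a minimal $\epsilon$-cover of $\Vc$ in the metric $\mathrm{dist}(V,V')=\sup_s\abs{V(s)-V'(s)}$, so $\abs{\Vc_\epsilon}=\Nc_\epsilon(\Vc)$. Applying Step 1 to each $V'\in\Vc_\epsilon$ with failure budget $\delta/\Nc_\epsilon(\Vc)$ and union-bounding, on the resulting event the Step-1 bound holds for every net element with $\log\tfrac{1}{\delta}$ replaced by $\log\tfrac{\Nc_\epsilon(\Vc)}{\delta}$, uniformly over $k$. For an arbitrary $V\in\Vc$ I then choose $V'\in\Vc_\epsilon$ with $\sup_s\abs{V(s)-V'(s)}\le\epsilon$, write $\Delta=V-V'$, and split the martingale sum into the net term plus a perturbation term with increments $\Delta(s_\tau)-\E[\Delta(s_\tau)\mid\Fc_{\tau-1}]$, each bounded by $2\epsilon$ in absolute value. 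Using $\Lambdab_k^{-1}\preceq\la^{-1}\Iden_d$ and $\norm{\phib_\tau}\le 1$, the perturbation term is crudely bounded by $\la^{-1}\big(\sum_\tau 2\epsilon\big)^2=4k^2\epsilon^2/\la$. Combining the two pieces with $(a+b)^2\le 2a^2+2b^2$ doubles the net bound (turning $2H^2$ into $4H^2$) and doubles the perturbation bound (turning $4k^2\epsilon^2/\la$ into $8k^2\epsilon^2/\la$), yielding exactly the claimed inequality.

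\textbf{Main obstacle.} The subtle point is that in the intended application $V$ is \emph{data-dependent} (it is the value function the algorithm constructs from the very trajectories $\{s_\tau\}$ appearing in the sum), so one cannot apply the fixed-$V$ martingale bound directly; the covering argument is precisely what decouples the randomness of $V$ from the martingale. The only genuine care needed is bookkeeping of constants: keeping the sub-Gaussian parameter at $H$ (range $2H$) so that the factor-of-two from the $(a+b)^2$ split lands the leading constant on $4H^2$ rather than something larger, and checking that the crude $\Lambdab_k^{-1}\preceq\la^{-1}\Iden_d$ perturbation estimate is tight enough to contribute only the additive $8k^2\epsilon^2/\la$ term.
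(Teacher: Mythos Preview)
The paper does not prove this statement; it is quoted verbatim as an auxiliary result from \citet{jin2020provably} (Lemma D.4 there) and used without proof. Your proposal is the standard proof of that lemma---self-normalized martingale bound for a fixed $V$, union bound over an $\epsilon$-net, then a crude $\Lambdab_k^{-1}\preceq\la^{-1}\Iden_d$ discretization estimate combined via $(a+b)^2\le 2a^2+2b^2$---and the constant bookkeeping you outline lands exactly on $4H^2$ and $8k^2\epsilon^2/\la$, matching the original reference.
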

\begin{lemma}\label{lemm:coveringnumber} For any $\epsilon>0$, the $\epsilon$-covering number of the Euclidean
ball in $\mathbb{R}^{d}$ with radius $R>0$ is upper bounded by $(1+2R/\epsilon)^{d}$.
\end{lemma}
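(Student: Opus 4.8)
The plan is to use the standard volumetric packing argument that relates covering numbers to packing numbers through a volume comparison in $\mathbb{R}^d$. Write $B(\cb,r)$ for the Euclidean ball of radius $r$ centered at $\cb$, and let $B = B(\mathbf{0},R)$ denote the ball we wish to cover. The whole proof reduces to producing a finite $\epsilon$-net of $B$ and counting its points via volume.

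First I would consider a \emph{maximal} $\epsilon$-packing of $B$, i.e., a set of points $\{\x_1,\dots,\x_N\}\subseteq B$ maximal with respect to the separation property $\norm{\x_i-\x_j}_2\ge\epsilon$ for all $i\ne j$. The key observation is that any maximal packing is automatically an $\epsilon$-net of $B$: if some $\x\in B$ satisfied $\norm{\x-\x_i}_2>\epsilon$ for every $i$, then $\x$ could be appended to the packing without violating the separation condition, contradicting maximality. Hence the packing points form an $\epsilon$-cover, and the covering number is bounded by $N$; it remains only to bound $N$.

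Second I would carry out the volume comparison. Since the centers are pairwise at distance at least $\epsilon$, the balls $B(\x_i,\epsilon/2)$ are pairwise disjoint, and since each $\x_i\in B$, each such ball lies inside the enlarged ball $B(\mathbf{0},R+\epsilon/2)$. Writing $\mathrm{vol}(\cdot)$ for Lebesgue measure and using that a $d$-dimensional ball of radius $r$ has volume proportional to $r^d$ (so the dimensional unit-ball constant cancels), disjointness and containment give
\begin{align}
N\,(\epsilon/2)^d \;=\; \sum_{i=1}^N \mathrm{vol}\!\left(B(\x_i,\epsilon/2)\right) \;\le\; \mathrm{vol}\!\left(B(\mathbf{0},R+\epsilon/2)\right) \;=\; (R+\epsilon/2)^d.\nonumber
\end{align}
Rearranging yields $N\le\left((R+\epsilon/2)/(\epsilon/2)\right)^d = (1+2R/\epsilon)^d$, which is exactly the claimed bound.

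There is no substantial obstacle here, as this is a classical estimate; the only points requiring a little care are the packing-to-covering reduction (verifying that maximality forces the net property) and the containment $B(\x_i,\epsilon/2)\subseteq B(\mathbf{0},R+\epsilon/2)$, which is what licenses the volume comparison and makes the unknown unit-ball volume constant cancel cleanly so that only the radius ratio survives.
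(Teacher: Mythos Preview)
Your argument is correct and is the standard volumetric packing-to-covering proof of this classical estimate. The paper itself does not supply a proof for this lemma; it is listed among the auxiliary lemmas as a known fact and used without justification, so there is no ``paper's own proof'' to compare against. Your write-up would serve perfectly well as the missing justification.
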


\begin{lemma}\label{lemm:coveringnumberQ1}
For a fixed $w$, let $\Vc$ denote a class of functions mapping from $\Sc$ to $\mathbb{R}$ with following parametric form 
\begin{align}
     V(.) = \min\left\{\max_{a\in\Ac}\left\langle\y,\phib(.,a)\right\rangle+\beta\sqrt{\phib(.,a)^\top \Yb\phib(.,a)},H\right\}\nn,
\end{align}
where the parameters $\beta\in\mathbb{R}$, $\y\in\mathbb{R}^{d}$, and $\Yb\in\mathbb{R}^{d\times d}$ satisfy $0\leq\beta\leq B$, $\norm{\y}\leq y$, and $\norm{\Yb}\leq \la^{-1}$. Assume $\norm{\phib(s,a)}\leq 1$ for all $(s,a)\in\Sc\times\Ac$. Then 
\begin{align}
   \log\left(\Nc_\epsilon(\Vc)\right)\leq d\log(1+4y/\epsilon)+d^2\log\left(\frac{1+8B^2\sqrt{d}}{\la\epsilon^2}\right).\nn
\end{align}
\end{lemma}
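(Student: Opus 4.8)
The plan is to follow the standard parameter-covering argument (cf.\ Lemma~D.6 of \citet{jin2020provably}): first show that each $V\in\Vc$ is Lipschitz as a function of its parameters, then cover the parameter space and transport the cover to $\Vc$. The first reduction is to absorb $\beta$ into the matrix. I would set $\Yb'\coloneqq\beta^2\Yb$, so that $\beta\sqrt{\phib(.,a)^\top\Yb\phib(.,a)}=\sqrt{\phib(.,a)^\top\Yb'\phib(.,a)}$ and each $V$ is determined by the pair $(\y,\Yb')$. The constraints $0\le\beta\le B$ and $\norm{\Yb}_2\le\la^{-1}$ then give $\norm{\Yb'}_2\le B^2\la^{-1}$, hence $\norm{\Yb'}_F\le\sqrt{d}\,\norm{\Yb'}_2\le\sqrt{d}B^2/\la$, while $\norm{\y}\le y$ is unchanged (the relevant matrices are PSD, so the square roots are well defined).

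Next I would establish the Lipschitz bound. Given $V_1,V_2\in\Vc$ with parameters $(\y_1,\Yb_1')$ and $(\y_2,\Yb_2')$, I would use that $\min\{\cdot,H\}$ and $\max_{a\in\Ac}$ are both non-expansive to drop the outer truncation and the maximum, reducing matters to a pointwise bound on $\langle\y_1-\y_2,\phib(.,a)\rangle+\big(\sqrt{\phib(.,a)^\top\Yb_1'\phib(.,a)}-\sqrt{\phib(.,a)^\top\Yb_2'\phib(.,a)}\big)$. The first term is at most $\norm{\y_1-\y_2}$ since $\norm{\phib}\le 1$; for the second I would invoke the elementary inequality $|\sqrt{a}-\sqrt{b}|\le\sqrt{|a-b|}$ together with $|\phib^\top(\Yb_1'-\Yb_2')\phib|\le\norm{\Yb_1'-\Yb_2'}_F$ (as $\norm{\phib}\le1$), which yields $\sqrt{\norm{\Yb_1'-\Yb_2'}_F}$. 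Combining gives
\begin{align}
{\rm dist}(V_1,V_2)\le\norm{\y_1-\y_2}+\sqrt{\norm{\Yb_1'-\Yb_2'}_F}.\nonumber
\end{align}

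Finally I would cover the two parameter sets separately via Lemma~\ref{lemm:coveringnumber}. Take an $(\epsilon/2)$-cover $\Cc_{\y}$ of $\{\y:\norm{\y}\le y\}\subset\mathbb{R}^d$, of size at most $(1+4y/\epsilon)^d$, and an $(\epsilon^2/4)$-cover $\Cc_{\Yb}$ of $\{\Yb':\norm{\Yb'}_F\le\sqrt{d}B^2/\la\}$, regarded as a Euclidean ball of radius $\sqrt{d}B^2/\la$ in $\mathbb{R}^{d^2}$ under the Frobenius norm, of size at most $(1+8\sqrt{d}B^2/(\la\epsilon^2))^{d^2}$. For any $V$ with parameters $(\y,\Yb')$, selecting the nearest points of $\Cc_\y$ and $\Cc_\Yb$ makes the displayed distance at most $\epsilon/2+\sqrt{\epsilon^2/4}=\epsilon$, so the induced product family is an $\epsilon$-cover of $\Vc$ and $\Nc_\epsilon(\Vc)\le|\Cc_\y|\cdot|\Cc_{\Yb}|$. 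Taking logarithms and using the elementary bound $1+8\sqrt{d}B^2/(\la\epsilon^2)\le(1+8B^2\sqrt{d})/(\la\epsilon^2)$ (valid in the regime $\la\epsilon^2\le1$) gives the claimed inequality.

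The main obstacle is the Lipschitz estimate for the square-root/quadratic term: one must calibrate the matrix cover to accuracy $\epsilon^2$ (rather than $\epsilon$) precisely so that, after taking the square root, its contribution to ${\rm dist}$ is of order $\epsilon$. This mismatch between the linear accuracy needed for $\y$ and the quadratic accuracy needed for $\Yb'$ is exactly what produces the extra $1/\epsilon^2$ inside the $d^2$-dimensional logarithmic term, and getting the exponents $d$ versus $d^2$ and the $\epsilon$ versus $\epsilon^2$ consistent is the crux of the argument.
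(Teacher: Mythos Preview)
Your proposal is correct and follows essentially the same approach as the paper: reparametrize by $\Yb'=\beta^2\Yb$, establish the Lipschitz bound ${\rm dist}(V_1,V_2)\le\norm{\y_1-\y_2}+\sqrt{\norm{\Yb_1'-\Yb_2'}_F}$ via non-expansiveness and $|\sqrt{a}-\sqrt{b}|\le\sqrt{|a-b|}$, then cover the two parameter balls at scales $\epsilon/2$ and $\epsilon^2/4$ using Lemma~\ref{lemm:coveringnumber}. If anything, you are slightly more careful than the paper in flagging that the passage from $1+8\sqrt{d}B^2/(\la\epsilon^2)$ to $(1+8B^2\sqrt{d})/(\la\epsilon^2)$ requires $\la\epsilon^2\le1$.
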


\begin{proof}
First, we reparametrize $\Vc$ by letting $\tilde\Yb = \beta^2\Yb$. We have
\begin{align}
     V(.) = \min\left\{\max_{a\in\Ac}\left\langle\y,\phib(.,a)\right\rangle+\sqrt{\phib(.,a)^\top \tilde\Yb\phib(.,a)},H\right\}\nn,
\end{align}
for $\norm{\y}\leq y$ and $\norm{\tilde\Yb}\leq \frac{B^2}{\la}$. For any two functions $V_1,V_2\in\Vc$ with parameters $\left(\y^1,\tilde\Yb^1\right)$ and $\left(\y^2,\tilde\Yb^2\right)$, respectively, we have
\begin{align}
  {\rm dist}(V_1,V_2)&\leq \sup_{(s,a)\in\Sc\times\Ac}\left|\left[\left\langle\y^1,\phib(s,a)\right\rangle+\sqrt{\phib(s,a)^\top \tilde\Yb^1\phib(s,a)}\right]\right.\nn\\
  &\left.-\left[\left\langle\y^2,\phib(s,a)\right\rangle+\sqrt{\phib(s,a)^\top \tilde\Yb^2\phib(s,a)}\right]\right|\nn\\
  &\leq\sup_{\phib:\norm{\phib}\leq 1}\abs{\left[\left\langle\y^1,\phib\right\rangle+\sqrt{\phib^\top \tilde\Yb^1 \phib}\right]-\left[\left\langle\y^2,\phib\right\rangle+\sqrt{\phib^\top \tilde\Yb^2 \phib}\right]}\nn\\
  &\leq\sup_{\phib:\norm{\phib}\leq 1}\abs{\left\langle\y^1-\y^2,\phib\right\rangle}+\sup_{\phib:\norm{\phib}\leq 1}\sqrt{\abs{\phib^\top \left(\tilde\Yb^1-\tilde\Yb^2\right) \phib}}\tag{because $\abs{\sqrt{a}-\sqrt{b}}\leq\sqrt{\abs{a-b}}$ for $a,b\geq0$}\\
  &= \sqrt{\norm{\tilde\Yb^1-\tilde\Yb^2}}\nn\\
  &\leq\norm{\y^1-\y^2}+\sqrt{\norm{\tilde\Yb^1-\tilde\Yb^2}_F}\label{eq:lastinthecoveringproofQ1}.
\end{align}
Let  $\Cc_{\y}$ be $\epsilon/2$-covers of $\{\y\in\mathbb{R}^{d}:\norm{\y}\leq y\}$ with respect to the $2$-norm and $\Cc_{\Yb}$ be an $\epsilon^2/4$-cover of $\{\Yb\in\mathbb{R}^{d\times d}:\norm{\Yb}_F\leq \frac{B^2\sqrt{d}}{\la}\}$, with respect to the Frobenius norm. By Lemma \ref{lemm:coveringnumber}, we know
\begin{align}
     \abs{\Cc_{\y}}\leq (1+4y/\epsilon)^{d},\quad\abs{\Cc_{\Yb}}\leq \left(\frac{1+8B^2\sqrt{d}}{\la\epsilon^2}\right)^{d^2}.\nn
\end{align}
According to \eqref{eq:lastinthecoveringproofQ1}, it holds that $\Nc_\epsilon(\Vc)\leq\abs{\Cc_{\y}}\abs{\Cc_{\Yb}}$, and therefore
\begin{align}
    \log\left(\Nc_\epsilon(\Vc)\right)\leq d\log(1+4y/\epsilon)+d^2\log\left(\frac{1+8B^2\sqrt{d}}{\la\epsilon^2}\right).\nn
\end{align}
\end{proof}

\end{document}